\documentclass[final,1p,times]{elsarticle}

\usepackage[T1]{fontenc}        % 可选：如无字体问题可删
\usepackage{amsmath,amssymb,amsfonts}
\usepackage{amsthm}
\usepackage{booktabs}
\usepackage{multirow}
\usepackage{makecell}
\usepackage{threeparttable}
\usepackage{tabularx}
\usepackage{array}
\usepackage{arydshln}
\usepackage{nicefrac}
\usepackage{microtype}
\usepackage{xcolor}
\usepackage{enumitem}
\usepackage{romannum}
\usepackage{subcaption}
\usepackage[ruled,vlined]{algorithm2e}
\SetKwInput{KwInit}{Initialize}

\usepackage[hidelinks]{hyperref}
\usepackage{url}

\newtheorem{theorem}{Theorem}

\newtheorem{proposition}{Proposition}
\newtheorem{assumption}{Assumption}

\newcommand{\algc}{Return-Cost Regularized Constrained Decision Transformer}
\newcommand{\algcb}{RCDT}
\newcommand{\algw}{Trajectory-weighted Value-regularized Constrained Decision Transformer}
\newcommand{\algwb}{TVCDT}

\newcommand{\abbc}{CR2DT}

\usepackage{amssymb}
\usepackage{amsmath}
\journal{MDPI}

\begin{document}

\begin{frontmatter}

\title{Conditional Sequence Modeling for Safe Reinforcement Learning}

%% use optional labels to link authors explicitly to addresses:
%% \author[label1,label2]{}
%% \affiliation[label1]{organization={},
%%             addressline={},
%%             city={},
%%             postcode={},
%%             state={},
%%             country={}}
%%
%% \affiliation[label2]{organization={},
%%             addressline={},
%%             city={},
%%             postcode={},
%%             state={},
%%             country={}}

\author[label1]{Wensong Bai}
\ead{wensongb@zju.edu.cn}

\author[label1,label2]{Chao Zhang\corref{cor1}}
\ead{zczju@zju.edu.cn}

\author[label1]{Qihang Xu}
\ead{22351233@zju.edu.cn}

\author[label1]{Chufan Chen}
\ead{chufan.chen@zju.edu.cn}

% \author[label1]{Hanbin Zhao}
% \ead{zhaohanbin@zju.edu.cn}

\author[label1]{Chenhao Zhou}
\ead{zhouchenhao}

\author[label1]{Hui Qian}
\ead{qianhui@zju.edu.cn}

\cortext[cor1]{Corresponding author. Email: zczju@zju.edu.cn}

\affiliation[label1]{
  organization={College of Computer Science and Technology, Zhejiang University},
  city={Hangzhou},
  country={China}
}

\affiliation[label2]{
  organization={Advanced Technology Institute, Zhejiang University},
  city={Hangzhou},
  country={China}
}

%% Abstract
\begin{abstract}
Offline safe reinforcement learning (RL) aims to learn policies from a fixed dataset while maximizing performance under cumulative cost constraints. In practice, deployment requirements often vary across scenarios, necessitating a single policy that can adapt zero-shot to different cost thresholds. However, most existing offline safe RL methods are trained under a pre-specified threshold, yielding policies with limited generalization and deployment flexibility across cost thresholds.
Motivated by recent progress in conditional sequence modeling (CSM), which enables flexible goal-conditioned control by specifying target returns, we propose \algcb, a CSM-based method that supports zero-shot deployment across multiple cost thresholds within a single trained policy. \algcb~is the first CSM-based offline safe RL algorithm that integrates a Lagrangian-style cost penalty with an auto-adaptive penalty coefficient. To avoid overly conservative behavior and achieve a more favorable return--cost trade-off, a reward--cost-aware trajectory reweighting mechanism and Q-value regularization are further incorporated.
Extensive experiments on the DSRL benchmark demonstrate that \algcb~consistently improves return--cost trade-offs over representative baselines, advancing the state-of-the-art in offline safe RL.
\end{abstract}

\begin{keyword}
Offline reinforcement learning \sep Safe reinforcement learning \sep Conditional sequence modeling

\end{keyword}

\end{frontmatter}

\section{Introduction} \label{sec:intro}

Offline safe reinforcement learning (RL) aims to learn reliable decision-making policies from a fixed dataset while maximizing task performance under cumulative cost constraints, without any additional environment interaction \cite{koiralalatent,gong2025offline,suboundary,guo2025constraint}. This setting is particularly relevant for safety-critical domains such as robotics and autonomous systems, where online exploration can be prohibitively expensive and unsafe\cite{zhang2025wococo,ji2025exbody2}. In practice, safety requirements are rarely unique: different deployment scenarios or operating conditions may impose different cost constraints, and policies often need to evaluate and tune the return--cost trade-off across a range of cumulative cost thresholds \cite{liu2023constrained,gong2025offline}. Such variability highlights the need for policies with \emph{zero-shot} adaptability across multiple cost thresholds \cite{kirk2023survey}. However, most prior offline safe RL methods are trained under a pre-specified cost threshold, resulting in policies that are tied to that specific constraint and hence do not naturally support one-policy, zero-shot deployment across varying thresholds \cite{leecoptidice,xu2022constraints,zhengsafe}.

Recent advances in offline RL via conditional sequence modeling (CSM) suggest a promising route to accommodate varying deployment requirements within a single policy \cite{wu2024elastic,hu2024q,zheng2024decomposed,zhengdecision,hu2024harmodt}. Conceptually, CSM reformulates offline RL into supervised learning on trajectory data by training a goal-conditioned action predictor \cite{chen2021decision,brandfonbrener2022does}. One representative CSM approach is the Decision Transformer (DT)~\cite{chen2021decision}, which trains a Transformer to predict actions from trajectory prefixes conditioned on a target return-to-go (RTG) signal.
At inference time, DT treats decision making as a conditional generation problem, and thus can adjust its behavior by varying the target RTG token. By making decisions directly from temporally extended trajectories, this paradigm also bypasses recursive value estimation, thereby mitigating bootstrapping errors and challenges associated with dynamic programming in offline settings~\cite{chen2021decision,brandfonbrener2022does}. These properties make CSM a compelling backbone candidate for offline safe RL, where one would ideally like to adapt the return-cost trade-off across budgets without retraining.

Nevertheless, a straightforward extension of DT to constrained settings, by introducing a cost-to-go (CTG) token and conditioning on both return and cost, does not necessarily yield satisfactory safety behavior. The key issue is that maximum-likelihood training encourages the model to match the conditioning signals, which obscures the asymmetry in constrained Markov decision processes (CMDPs) where return is to be optimized while cost specifies a constraint. As a result, naive CTG matching can induce unstable return-cost trade-offs and even constraint violations, especially when the dataset provides limited coverage of low-cost behaviors. Such instability and constraint violations under naive CTG-conditioned DT variants have been widely observed in prior work and in our experiments \cite{koiralalatent,suboundary,liu2023constrained,guo2025constraint}. Consequently, a central question arises: \emph{how can we endow CSM-based policies with a principled and robust mechanism for achieving favorable return-cost trade-offs while preserving their zero-shot adaptation flexibility?}

To develop a robust CSM-based approach for offline safe RL, we first analyze the discrepancy between the specified RTG/CTG signals and the resulting expected return and cumulative cost induced by a conditioned CSM policy.
More precisely, we consider a \emph{fixed} conditioning function $F(\cdot)$ that determines the RTG/CTG inputs used at inference, and let $R(\tau)$ and $C(\tau)$ denote the (random) return and cumulative cost of a trajectory $\tau$ generated by the behavior policy $\beta$ from an initial state $s_1$. 
Our analysis shows that, under mild CMDP assumptions, this discrepancy admits an upper bound that scales as $O(\varepsilon(\tfrac{1}{\alpha_F}+2) H^2)$, where $\alpha_F$ lower-bounds the joint return--cost coverage of the target profile in the offline dataset, i.e.,
$P_{\pi_{\beta}}\bigl((R(\tau),C(\tau)) = F(s_1)|s_1\bigr)\ge \alpha_F$.
Consequently, when the desired return--cost conditioning function $F(\cdot)$ has low coverage (small $\alpha_F$), the induced optimal policy may substantially deviate from the intended RTG/CTG targets, leading to unstable return--cost trade-offs and potential constraint violations. 
In particular, simply specifying a high RTG together with a low CTG does not reliably yield favorable return--cost trade-offs on challenging datasets where safe high-return behaviors are under-represented. 
These analysis motivates augmenting naive conditioning with \emph{weighted} and \emph{regularized} training mechanisms that shift learning toward trajectories with better return--cost trade-offs while preserving zero-shot adaptability.

Building on this insight, we propose the first CSM-based offline safe RL algorithm that integrates a Lagrangian-style cost penalty while preserving zero-shot adaptation across different cost thresholds. We term the proposed algorithm \algc~(\algcb). 
At a high level, \algcb~minimizes a weighted maximum-likelihood objective augmented with an explicit cost penalty, where the penalty term is induced from a primal--dual Lagrangian formulation of CMDPs. Specifically, \algcb~treats the cost-penalty coefficient as a dual variable and updates it via a standard dual-ascent step driven by the current cost estimate, which provides a principled, auto-adaptive mechanism for tuning the regularization strength while avoiding hard-wiring the policy to a single cost threshold.
To avoid overly conservative behavior and achieve a more favorable return--cost trade-off, \algcb~further incorporates two complementary ingredients: (\romannumeral1) a reward--cost-aware trajectory-level reweighting mechanism that emphasizes trajectories with desirable return--cost profiles; and (\romannumeral2) Q-value regularization as an additional optimization signal to steer the policy toward high-return regions. 
Together, these designs decouple policy learning from any single pre-specified cost threshold while maintaining strong task performance, thereby enabling one-policy deployment across multiple cost thresholds in a zero-shot manner.

Our contributions are summarized as follows:
\begin{enumerate}
    \item 
    We propose \algcb, the first CSM-based offline safe RL method that integrates a Lagrangian-style cost penalty with an \emph{auto-adaptive} dual-ascent update of the penalty coefficient. 
    \algcb~achieves favorable return--cost trade-offs while enabling zero-shot deployment across multiple cost thresholds, and empirically outperforms state-of-the-art offline safe RL baselines under stringent constraints.
    \item 
    We theoretically characterize the discrepancy between the prescribed RTG/CTG signals and the realized expected return and cumulative cost of CSM policies through the lens of \emph{joint return--cost coverage}. 
    The theoretical analysis pinpoints when a CSM policy can be regarded as a reliable backbone for offline safe RL and establishes performance limits for zero-shot control via RTG/CTG conditioning.
    \item
    To encourage strong return performance without inducing overly conservative behavior, we introduce a reward--cost-aware trajectory-level reweighting scheme together with Q-value regularization as complementary training signals. This design introduces an additional inductive bias toward trajectories with favorable return--cost trade-offs. Moreover, we show that the commonly used expert KL regularizer can be interpreted as a special case of our reweighting mechanism.
\end{enumerate}
We evaluate \algcb~on the DSRL benchmark~\cite{liu2023datasets} under multiple cumulative cost thresholds, where a single trained policy is assessed in a zero-shot manner across all thresholds. 
Experimental results show consistent improvements in return--cost trade-offs over representative baselines, establishing \algcb~as a strong candidate for advancing the state-of-the-art in offline safe RL.

\section{Related Work}

Offline RL algorithms learn a policy entirely from a static offline dataset of past interactions, without additional access to interactions with environment during training~\cite{kumar2020conservative,fujimoto2021minimalist,hu2024q,hu2025graph}. This paradigm is particularly useful when online interaction is expensive or high-risk, for example, roboric control systems and safety-critical applications.

\subsection{Offline Safe Reinforcement Learning}\label{sec:related1}

Offline safe RL aims to learn reliable decision-making policies from a fixed dataset while maximizing task performance under cumulative cost constraints, without any additional environment interaction. Compared with standard offline RL, the central challenge is to ensure reliable constraint satisfaction under distributional shift in the state-action space, rather than merely reducing empirical costs on the training data.\cite{liu2023datasets}

Classical offline safe RL methods largely build on imitation learning or Lagrangian extensions of offline value-based RL algorithms. Behavior cloning (BC) treats policy learning as supervised learning on a static dataset, directly fitting the conditional action distribution given states. Safety-aware variants often introduce simple data selection or reweighting strategies that emphasize trajectories with lower cumulative cost, so that the learned policy imitates predominantly safe behaviors; however, such filtering may reduce data coverage and induce overly conservative policies \cite{liu2023datasets}. Another class of approaches augments standard offline value-based RL algorithms with a Lagrangian relaxation of the CMDP constraint. Representative examples include BCQ-Lag and BEAR-Lag, which extend BCQ~\cite{fujimoto2019off} and BEAR~\cite{kumar2019stabilizing} by jointly learning reward and cost critics together with a Lagrange multiplier, and CPQ~\cite{xu2022constraints}, which incorporates conservative regularization on both reward and cost critics to keep the learned policy close to the behavior distribution. COptiDICE~\cite{lee2022coptidice} instead operates in the occupancy-measure space, estimating distribution corrections and solving a convex objective that maximizes return under an explicit upper bound on the expected cost.

Another line of research improves safety by explicitly reasoning about data quality and feasible regions \cite{koiralalatent,zhengsafe,gong2025offline,yao2024oasis}.
LSPC~\cite{koiralalatent} learns latent safety constraints using a conditional variational autoencoder and restricts policy optimization to a latent region that jointly encodes reward and cost signals, thereby mitigating out-of-distribution actions in the original state-action space. FISOR~\cite{zhengsafe} revisits offline safe RL from a reachability perspective, translating hard safety constraints into the identification of the largest feasible region supported by the dataset and then solving a feasibility-dependent objective, yielding a policy that can be implemented via weighted behavior cloning with a diffusion model. TraC~\cite{gong2025offline} instead recasts offline safe RL as a trajectory classification problem: it constructs desirable and undesirable trajectory sets based on cumulative rewards and costs, and optimizes a policy to imitate desirable trajectories while avoiding undesirable ones, effectively performing trajectory-level filtering. The diffusion-based offline safe RL method OASIS~\cite{yao2024oasis} employs a conditional diffusion model to reshape the offline action distribution toward trajectories that jointly exhibit low cost and high return, thereby suppressing unsafe behaviors that are poorly supported by the dataset.

Motivated by recent advances in large-scale sequence modeling \cite{chen2021decision,hu2024q,bairebalancing}, conditional sequence modeling (CSM) has emerged as a promising paradigm for offline safe RL \cite{liu2023constrained,suboundary}. Constrained Decision Transformer (CDT)~\cite{liu2023constrained} extends Decision Transformer by conditioning action prediction on both return-to-go (RTG) and cost-to-go (CTG) tokens, thus formulates offline safe RL as a multi-objective sequence modeling problem in which a single CSM policy can be evaluated under different cost thresholds. B2R~\cite{suboundary} further refines how cost information is injected into the sequence model, encouraging the learned policy to respect safety boundaries while maintaining competitive returns. 
Compared with imitation-learning heuristics, Lagrangian extensions of value-based offline RL, and feasible-region based approaches, these CSM methods provide a particularly direct mechanism for trading off performance and safety through target conditioning, making one-policy, multi-threshold deployment more natural.
Nevertheless, existing CSM-based methods largely treat RTG/CTG conditioning as a heuristic control interface, and the fundamental limits of such zero-shot adaptation remain underexplored.
In this work, we analyze the discrepancy between the specified RTG/CTG signals and the resulting expected return and cumulative cost induced by a conditioned CSM policy through the lens of joint return--cost coverage, thereby characterizing when target conditioning can be reliably realized from offline data.
Building on this perspective, we propose \algcb, which augments CSM training with weighted and regularized mechanisms, yielding improved return--cost trade-offs and stronger zero-shot performance over prior CSM-based offline safe RL methods.

\subsection{Conditional Sequence Modeling for Offline RL}\label{sec:related2}

Casting RL as conditional sequence modeling has emerged as a powerful alternative to value-based and policy-gradient methods in offline settings \cite{chen2021decision,hu2024q,zeng2023goal,bairebalancing}. Instead of explicitly estimating value functions or behavior-regularized policies, CSM methods train autoregressive models on offline trajectories and condition them on side information such as desired returns, goals, or future states \cite{chen2021decision,zeng2023goal,kim2024stitching}. This formulation enables the direct reuse of advances in sequence modeling and large-scale supervised training, while bypassing several instability sources that commonly arise in traditional offline RL, most notably the reliance on bootstrapping to propagate value estimates (one of the “deadly triad” \cite{sutton1998reinforcement}), and the myopic behaviors induced by discounting future rewards~\cite{chen2021decision,zhou2023free}.

The most representative instantiation of this idea is the Decision Transformer (DT)~\cite{chen2021decision}, which conditions a causal Transformer \cite{vaswani2017attention} on past states, actions, and a target return, and predicts future actions so as to achieve the specified return. Subsequent works refine this return-conditioned formulation by modifying the conditioning signal or the learning objective. For instance, Q-learning Decision Transformer \cite{yamagata2023q} and its variants replace the raw return with value-based targets obtained via dynamic programming (DP), aiming to better capture long-horizon optimality and alleviate sparse-reward issues~\cite{yamagata2023q,gao2024act}. Advantage-guided and return-regularized methods reshape the supervised loss using critic estimates to emphasize near-optimal actions in the offline dataset~\cite{weiadvantage,tu2025dataset}. Similarly, Q-value Regularized Transformers (QT) \cite{hu2024q} and Critic-guided Decision Transformers (CGDT) \cite{wang2024critic} introduce estimated Q-values as explicit regularizers of the negative log-likelihood loss, sharpening the model's preference for high-quality behaviors. 

Another line of work employs sequence models as trajectory-level dynamics models rather than purely as policies. The Trajectory Transformer learns a joint model over sequences of states, actions, and rewards, and combines it with beam-search-style planning to generate high-return trajectories from offline data~\cite{janner2021offline}. Bootstrapped variants further improve upon this idea by leveraging the learned sequence model to self-generate additional synthetic trajectories and augment the offline dataset~\cite{wang2022bootstrapped}. Complementary to these data-augmentation approaches, multimodal architectures such as the Decision Transducer disentangle states, actions, and rewards into separate unimodal sequences and selectively fuse them, improving sample efficiency and representation flexibility for offline RL~\cite{wang2023trajectory}. These approaches highlight that sequence modeling can serve as a versatile backbone for both policy learning and model-based planning in offline RL. 

CSM has also been extended to safety-constrained settings by incorporating cost signals or richer specifications into the conditioning interface, enabling a single model to represent different safety--performance preferences~\cite{liu2023constrained,wang2024safe,guo2024temporal}. We refer readers to Section~\ref{sec:related1} for a detailed discussion of representative CSM-based offline safe RL methods.

\section{Preliminary}

\subsection{Constrained Markov Decision Process}
Safe reinforcement learning can be naturally formalized as a Constrained Markov Decision Process (CMDP) \cite{altman2021constrained}.
A finite horizon CMDP is defined as a tuple $(\mathcal{S}, \mathcal{A}, \mathcal{T}, r, c, \mu, H)$,
where $\mathcal{S}$ and $\mathcal{A}$ denote the (continuous) state and action spaces.
$r: \mathcal{S} \times \mathcal{A} \rightarrow \mathbb{R}$ is the reward function, and $c: \mathcal{S} \times \mathcal{A} \rightarrow [0, c_{max}]$ is the cost function.
$\mathcal{T}$ defines the transition dynamics, i.e., $s_{t+1} \sim \mathcal{T}(\cdot|s_t,a_t)$.
$\mu$ is the initial state distribution, i.e., $s_{1} \sim \mu$, and $H$ is the time horizon. The return and cumulative cost of a trajectory $\tau= \{(s_t, a_t, r_t, c_t)_{t=1}^H\}$ are denoted by $R(\tau) = \sum_{t=1}^{H} r_t$ and $C(\tau) = \sum_{t=1}^{H} c_t$, respectively. A policy $\pi$ is considered feasible if its expected cumulative cost does not exceed a threshold $\kappa \in [0, +\infty)$, i.e. $\mathbb{E}_{\tau \sim \pi}C(\tau) \leq \kappa$. 

In the offline safe RL setup, algorithms are given a static dataset $\mathcal{D}$ consisting of multiple trajectories, collected by a behavior policy $\pi_{\beta}$. The goal of offline safe RL is to learn a feasible policy $\pi$ from $\mathcal{D}$, which maximizes the expected return while satisfying a cost constraint:
\begin{equation}\label{eq:goal-offRL}
    \text{arg}\max_{\pi} \mathbb{E}_{\tau \sim \pi}[R(\tau)] \quad \text{subject to} \quad \mathbb{E}_{\tau \sim \pi}[C(\tau)] \leq \kappa.
\end{equation}

\subsection{Conditioned Sequence Modeling}
Motivated by the wide-ranging applications and remarkable success of transformer models in various domains \cite{vaswani2017attention, brown2020language, chen2021decision, hulearning,hu2023prompt,kongmastering}, CSM-based methods formulate policy learning as a conditional supervised learning problem. A representative CSM approach for offline safe RL is the \emph{constrained decision transformer} (CDT)~\cite{liu2023constrained}, a safety-aware extension of DT~\cite{chen2021decision} in CMDP settings. The model architecture of CDT is shown in Figure \ref{fig:crvdt}. Specifically, during training, the policy $\pi$ is optimized by minimizing the empirical negative log-likelihood (NLL) loss over a dataset $\mathcal{D}$ of trajectories (we omit normalization here):
\begin{equation}\label{eq: NLL} 
    \mathcal{L}_{\text{CDT}}(\pi) = - {\textstyle\sum_{\tau \in \mathcal{D}}} {\textstyle\sum_{1 \leq t \leq H}} \log \pi(a_t \mid s_t, \bar{R}_t, \bar{C}_t,\bar{\tau}_{t-1}^{K}), 
\end{equation}
where $\bar{R}_t := \sum_{i=t}^{H} r_i$ is the \emph{return-to-go} (RTG) and $\bar{C}_t := \sum_{i=t}^{H} c_i$ is the \emph{cost-to-go} (CTG) for $\tau$ at time step $t$. $\bar{\tau}_{t-1}^{K} = (\bar{R}_{t-K}, \bar{C}_{t-K}, s_{t-K}, a_{t-K}, \dots, \bar{R}_{t-1}, \bar{C}_{t-1}, s_{t-1}, a_{t-1}) $ denotes the length-$K$ historical context.\footnote{When \(t-K < 1\), the context window is truncated to the available prefix of the trajectory.} During inference, the model predicts a sequence of actions autoregressively conditioned on target RTG and CTG tokens. 

Compared with Lagrangian extensions of value-based methods, CSM-based methods offer several advantages in the offline safe RL setting \cite{liu2023constrained,suboundary,wang2024critic,hu2024q}. First, by conditioning on both RTG and CTG signals, a single CSM policy can represent a continuum of return--cost trade-offs and be evaluated under different cost thresholds without retraining \cite{liu2023constrained,suboundary}. Moreover, recasting policy learning as a supervised sequence prediction problem enables the use of stable and scalable transformer architectures that can fully exploit long-horizon dependencies and temporal structure that are critical for safe decision-making \cite{chen2021decision,liu2023constrained}.

\begin{figure}[t]
    \centering
    \includegraphics[width=0.9\linewidth]{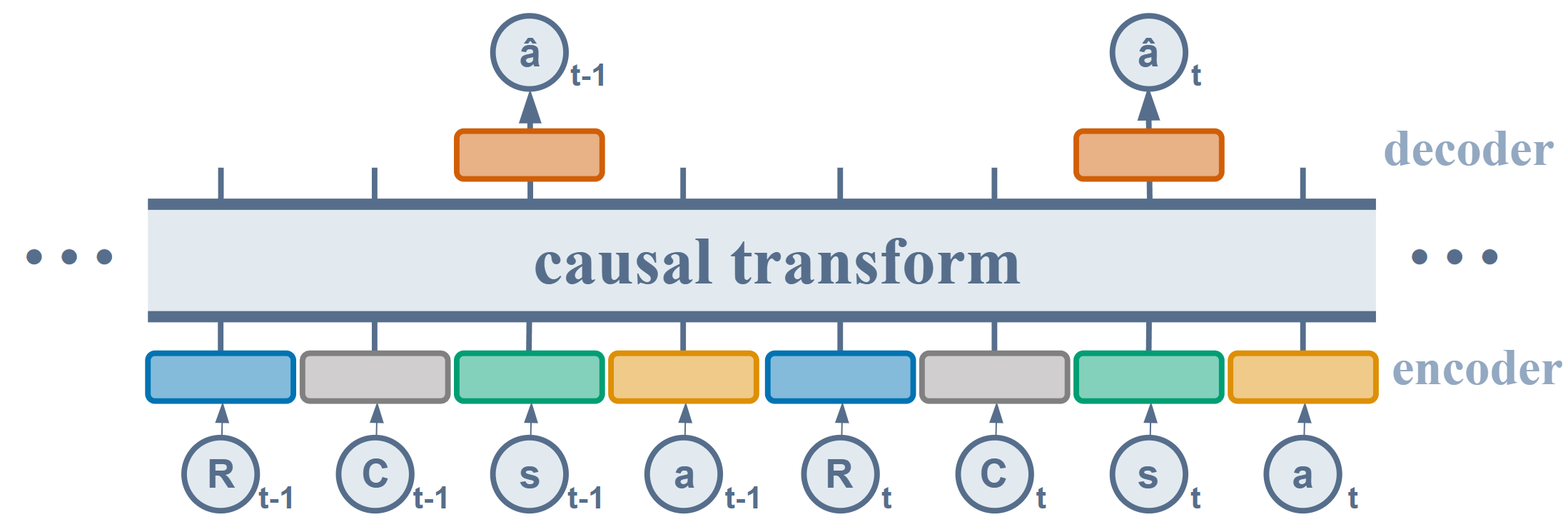}
    \caption{Overview of constrained decision transformer architecture.}
    \label{fig:crvdt}
\end{figure}

\section{Methodology} \label{sec:Methodology-new}

\subsection{Conditioned Sequence Modeling in CMDPs}
\label{subsec:cmdp-theory}

We analyze how conditional sequence modeling behaves in an idealized CMDP setting with infinite data and a fully expressive policy class. Our goal is to show that, under suitable assumptions, the policy that optimizes the CDT objective~\eqref{eq: NLL} is guaranteed to be near-optimal for the underlying CMDP. Let $P_{\pi_{\beta}}$ denote the joint distribution over states, actions, returns, and costs induced by $\pi_{\beta}$ and the CMDP dynamics. Recall that CDT is trained to maximize the likelihood of actions conditioned on states and target return-cost signals. Thus, when the model is given a conditioning function,
$$
F : \mathcal{S} \to \mathbb{R}^2, \quad \text{and} \quad
F(s) = \bigl(F_R(s), F_C(s)\bigr),
$$
the CDT loss in~\eqref{eq: NLL} attempts to learn the conditional distribution
$P_{\pi_{\beta}}(a \mid s, F(s))$.
In the infinite-data and realizable limit, the optimal CDT policy (denoted by $\pi^{\mathrm{CDT}}_F$) for a fixed conditioning function $F$ can be written as:
\begin{equation}
\pi^{\mathrm{CDT}}_F(a \mid s)
= P_{\pi_{\beta}}\bigl(a \mid s, F(s)\bigr)
= \frac{P_{\pi_{\beta}}(a \mid s)\, P_{\pi_{\beta}}\bigl(F(s) \mid s,a\bigr)}
       {P_{\pi_{\beta}}\bigl(F(s) \mid s\bigr)}
= \pi_{\beta}(a \mid s)\,
  \frac{P_{\pi_{\beta}}\bigl(F(s) \mid s,a\bigr)}
       {P_{\pi_{\beta}}\bigl(F(s) \mid s\bigr)}.
\label{eq:cdt-optimal-policy}
\end{equation}
Here, $F_R(s)$ and $F_C(s)$ can be interpreted as the target RTG and CTG associated with state $s$, and
$P_{\pi_{\beta}}\bigl(F(s) \mid s,a\bigr)$
denotes the probability (under $\pi_{\beta}$) that the trajectory starting from $(s_1 = s, a_1 = a)$ attains total return and cost equal to $F(s)$.
In other words, $\pi^{\mathrm{CDT}}_F$ re-weights the behavior policy $\pi_{\beta}(\cdot \mid s)$ according to how likely each action is to lead to a trajectory whose joint return-cost pair matches the conditioning signal $F(s)$.

Our analysis relies on a coverage condition and a mild regularity assumption.
For an initial state $s_1 \in \mathcal{S}$, let $R(\tau)$ and $C(\tau)$ denote the (random) return and cost of a trajectory $\tau$ sampled from the behavior policy $\pi_{\beta}$ starting from $s_1$.
Represent the return-cost coverage for $s_1$ by
$
P_{\pi_{\beta}}\bigl( (R(\tau), C(\tau)) = F(s_1) | s_1 \bigr),
$
which is the conditional probability that the total return and cost of the trajectory match the conditioning signal $F(s_1)$.
For any policy $\pi$, we define
$J_R(\pi) := \mathbb{E}_{\tau \sim \pi}[R(\tau)]$
and
$J_C(\pi) := \mathbb{E}_{\tau \sim \pi}[C(\tau)]$
as its expected return and expected cumulative cost, respectively.

\begin{assumption}   \label{assump:cmdp-near-det}

\begin{enumerate}
    \item \textbf{Return-cost coverage.}
    There exists a constant $\alpha_F > 0$ such that
    \begin{equation}
    P_{\pi_{\beta}}\bigl( (R(\tau), C(\tau)) = F(s_1) | s_1 \bigr) \ge \alpha_F,
    \quad \forall s_1 \sim \mu.
    \label{eq:joint-coverage}
    \end{equation}

    \item \textbf{Near determinism.}\footnote{This assumption does not restrict the stochasticity of the initial state $s_1 \sim \mu$.}
    There exist deterministic functions $T : \mathcal{S} \times \mathcal{A} \to \mathcal{S}$, $r : \mathcal{S} \times \mathcal{A} \to \mathbb{R}$, and $c : \mathcal{S} \times \mathcal{A} \to [0, c_{\max}]$ such that, for all $(s,a)$ and all time steps $t$,
    \begin{equation}
    P\bigl(
      r_t \neq r(s,a)
      \;\text{or}\;
      c_t \neq c(s,a)
      \;\text{or}\;
      s_{t+1} \neq T(s,a)
      |
      s_t = s, a_t = a
    \bigr) \le \varepsilon.
    \label{eq:near-det-cmdp}
    \end{equation}

    \item \textbf{Consistency of $F$.}
    For every transition $(s,a,s')$ under the deterministic dynamics $T$, the conditioning function $F(s) = (F_R(s), F_C(s))$ satisfies
    \begin{equation}
    F_R(s) = F_R(s') + r(s,a),
    \qquad
    F_C(s) = F_C(s') + c(s,a).
    \label{eq:consistency-F}
    \end{equation}
\end{enumerate}
\end{assumption}

%%%%%%%%%%%%%

\begin{theorem}[Joint alignment with respect to the conditioning function in CMDPs]
\label{thm:cmdp-alignment}
Consider a finite-horizon CMDP, a behavior policy $\pi_\beta$, and a conditioning function $F : \mathcal{S} \to \mathbb{R}^2$. Suppose Assumption~\ref{assump:cmdp-near-det} is satisfied. Let $\pi^{\mathrm{CDT}}_F$ denote the CDT policy in~\eqref{eq:cdt-optimal-policy}. Then there exists a universal constant $C > 0$ such that
\begin{align}
\mathbb{E}_{s_1 \sim \mu}\bigl[F_R(s_1)\bigr] - J_R\bigl(\pi^{\mathrm{CDT}}_F\bigr)
&\;\le\;
C \,\varepsilon \Bigl(\frac{1}{\alpha_F} + 2\Bigr) H^2,
\label{eq:reward-gap-cmdp}
\\
\mathbb{E}_{s_1 \sim \mu}\bigl[F_C(s_1)\bigr] - J_C\bigl(\pi^{\mathrm{CDT}}_F\bigr)
&\;\le\;
C \,\varepsilon \Bigl(\frac{1}{\alpha_F} + 2\Bigr) H^2.
\label{eq:cost-gap-cmdp}
\end{align}
% Moreover, there exist CMDPs for which these bounds are tight up to constant factors.
\end{theorem}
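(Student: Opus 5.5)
The plan is to adapt the argument of Brandfonbrener et al.\ for return-conditioned supervised learning to the joint return--cost setting, treating the pair $(R,C)$ as a single conditioning target. Two facts drive it. First, under near determinism, any trajectory that achieves the target $F(s_1)$ exactly must, except for an $O(\varepsilon)$-probability event at each step, follow the deterministic dynamics $T$, $r$, $c$. Second, by the consistency condition~\eqref{eq:consistency-F}, the conditioning value $F(s_t)$ then automatically equals the remaining RTG/CTG along that trajectory. The proof will therefore show that $\pi^{\mathrm{CDT}}_F$ mostly selects actions that are ``on track'': actions $a$ for which $F(s)=F(T(s,a))+(r(s,a),c(s,a))$ can still be met. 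Executing such actions yields realized return and cost within a small error of $F(s_1)$.

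\textbf{Step 1 (per-step off-track probability).} Fix $s_t=s$. Call $a$ \emph{good} if $P_{\pi_\beta}(F(s)\mid s,a)>0$ through the deterministic branch. Using~\eqref{eq:cdt-optimal-policy}, bound the mass $\pi^{\mathrm{CDT}}_F$ places on actions $a$ for which matching $F(s)$ requires a stochastic deviation. Such an action satisfies $P_{\pi_\beta}(F(s)\mid s,a)\le \varepsilon$-type quantities, and summing over these actions against $\pi_\beta(a\mid s)$ gives mass at most $\varepsilon/P_{\pi_\beta}(F(s)\mid s)$. At $t=1$, the coverage assumption~\eqref{eq:joint-coverage} gives the denominator $\ge\alpha_F$, so the bad mass is at most $\varepsilon/\alpha_F$. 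For later states reached on track, I will argue inductively that the conditional coverage does not degrade by more than a factor controlled by $\varepsilon$. The reason is that the event ``trajectory hits $F(s_1)$'' decomposes through the deterministic path, which gives a per-step bad mass of $O(\varepsilon/\alpha_F)$. In addition, the environment itself deviates with probability $\le\varepsilon$ per step.

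\textbf{Step 2 (union bound over the horizon).} Let $E$ be the event that at every step $t\le H$, both the chosen action is good and the environment transition, reward and cost follow $(T,r,c)$. A union bound gives $P_{\pi^{\mathrm{CDT}}_F}(E^c)\le H\varepsilon(1/\alpha_F+2)$, up to constants. The two $\varepsilon$ terms in the ``$+2$'' account for the environment deviation and for the residual mismatch in the likelihood ratio. On $E$, telescoping~\eqref{eq:consistency-F} gives $R(\tau)=F_R(s_1)-F_R(s_{H+1})$ with terminal value zero (I would assume $F(s_{H+1})=0$, which is implicit in the terminal RTG), and likewise for $C$. Hence $R(\tau)=F_R(s_1)$ and $C(\tau)=F_C(s_1)$ exactly.

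\textbf{Step 3 (converting to expectation gaps).} Write $\mathbb{E}[F_R(s_1)]-J_R=\mathbb{E}[(F_R(s_1)-R(\tau))\mathbf{1}_{E^c}]$. The integrand is bounded by $O(H)$ times the per-step reward magnitude, and by $Hc_{\max}$ for cost, with $|F|$ bounded similarly through consistency. This yields $C\varepsilon(1/\alpha_F+2)H^2$, with $C$ absorbing the reward and cost ranges. Only a one-sided bound is needed, which allows sign-favorable simplifications.

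The main obstacle is Step 1 at states $t>1$. Coverage is only assumed at initial states, so I must show that the conditional probability $P_{\pi_\beta}(F(s_t)\mid s_t)$ along on-track prefixes remains comparable to $\alpha_F$. The approach I would try first is to avoid per-state coverage altogether. Instead I would compare the trajectory distribution of $\pi^{\mathrm{CDT}}_F$ with $P_{\pi_\beta}(\tau\mid (R,C)=F(s_1),s_1)$, which has total-variation distance controlled by the ratio of near-deterministic to exact trajectory probabilities. The bad event under the conditioned behavior distribution would then be bounded by $H\varepsilon/\alpha_F$ via Bayes, plus $O(H\varepsilon)$ for the mismatch between the two.
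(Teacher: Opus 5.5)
You correctly flag Step 1 as the obstacle, but the plan you state for it would fail. Conditional coverage along on-track states does not stay within an $\varepsilon$-controlled factor of $\alpha_F$. Let $s_1$ have two behavior actions of probability $1/2$ each. One leads to $s^\ast$ with $P_{\pi_\beta}(F(s^\ast)\mid s^\ast)\approx 1$, the other to $s'$ with $P_{\pi_\beta}(F(s')\mid s')=\eta$. Then $\alpha_F\approx 1/2$ for every $\eta>0$, the second action is good in your sense, and your per-step bound at $s'$ is $\varepsilon/\eta$ rather than $O(\varepsilon/\alpha_F)$. The argument survives only because $\pi^{\mathrm{CDT}}_F$ reaches $s'$ with probability about $\eta$. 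So coverage must be tracked multiplicatively along the path, not bounded below state by state.

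Concretely, take a deviation-free path $\bar s_1=s_1$, $\bar s_{t+1}=T(\bar s_t,a_t)$. Set $q_t:=P_{\pi_\beta}(F(\bar s_t)\mid\bar s_t)$ and $q_{H+1}:=\mathbb{I}[F(\bar s_{H+1})=0]$. The Markov property, near determinism and consistency give $P_{\pi_\beta}(F(\bar s_t)\mid\bar s_t,a_t)=p_tq_{t+1}+\delta_t$. Here $p_t\ge 1-\varepsilon$ is the probability that step $t$ follows $(T,r,c)$, and $0\le\delta_t\le\varepsilon$. The denominators $q_t$ in $\prod_t\pi^{\mathrm{CDT}}_F(a_t\mid\bar s_t)$ therefore telescope against the numerators, and only $q_1\ge\alpha_F$ is ever used. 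This cancellation is the missing idea, and the paper's recursion relies on it too.

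With it, your fallback works, and you need only a one-sided ratio, not a total-variation bound. Let $Q:=P_{\pi_\beta}(\cdot\mid (R,C)=F(s_1),s_1)$, and let $G$ be the event that every step follows $(T,r,c)$. On $G\cap\{q_{H+1}=1\}$,
\[
\frac{P_{\pi^{\mathrm{CDT}}_F}(\tau\mid s_1)}{Q(\tau)}
=q_1\prod_{t=1}^{H}\frac{p_tq_{t+1}+\delta_t}{q_t}
=\prod_{t=1}^{H}\Bigl(p_t+\frac{\delta_t}{q_{t+1}}\Bigr)
\ge(1-\varepsilon)^H .
\]
Hence $P_{\pi^{\mathrm{CDT}}_F}\bigl(G\cap\{(R,C)=F(s_1)\}\mid s_1\bigr)\ge(1-H\varepsilon)\,Q(G)\ge(1-H\varepsilon)(1-H\varepsilon/\alpha_F)$, where the last step is your Bayes bound. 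On that event return and cost equal $F(s_1)$ exactly. Off it, $|F_R(s_1)-R(\tau)|\le 2H$, which gives $2\varepsilon(1/\alpha_F+1)H^2$; the cost case is the same with $c_{\max}$.

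Two corrections:
\begin{itemize}
\item $|F_R(s_1)|\le H$ holds because coverage makes $F(s_1)$ an attained return, not ``through consistency.''
\item Drop the assumption $F(s_{H+1})=0$. It is not a hypothesis, and combined with consistency for every action it would make every deviation-free trajectory hit $F(s_1)$, which trivializes coverage. The terminal condition must come from the match event: here $q_{H+1}=1$, and in your Step 2 it comes from goodness of $a_H$.
\end{itemize}

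Compared with the paper, the routes differ as follows. The paper splits the gap into (A), the target minus the deterministic return of the sampled action sequence, and (B), the deterministic minus the realized return. It bounds (B) by $\varepsilon H^2$ with a union bound. It bounds (A) via an \emph{upper} bound on the CDT action-sequence law: a telescoped behavior term supported on sequences whose deterministic return--cost equals $F(s_1)$, plus additive slack of total mass $H\varepsilon(1/\alpha_F+1)$. Your repaired route instead \emph{lower}-bounds the CDT mass on exactly matching, deviation-free trajectories in a single pass. This removes the (A)/(B) split and avoids having to read the paper's additive slack terms as sub-probability measures over action sequences. The cost is introducing the conditioned law $Q$.
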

\noindent The proof is provided in Appendix~\ref{sec:proof1}.

Theorem~\ref{thm:cmdp-alignment} formalizes the alignment behavior of return--cost conditioned sequence modeling in CMDPs. Under joint return--cost coverage and near-deterministic dynamics, the optimal solution to the CDT objective induced by a fixed conditioning function $F$ produces a policy whose expected return and expected cumulative cost track the prescribed targets, with a mismatch bounded by $O\bigl(\varepsilon(\tfrac{1}{\alpha_F}+2)H^2\bigr)$. This result provides a theoretical basis for viewing RTG/CTG conditioning as a mechanism for zero-shot control of the return--cost trade-off: when the target profile specified by $F$ is sufficiently represented in the behavior data (large $\alpha_F$), conditioning can be realized reliably up to a controllable error. At the same time, the explicit dependence on $\alpha_F$ clarifies a fundamental limitation of naive target conditioning in offline safe RL. In realistic datasets where safe high-return trajectories are scarce, $\alpha_F$ can be small for ambitious targets, and the induced policy may substantially deviate from the intended profile, leading to unstable return--cost trade-offs and even constraint violations. In particular, simply specifying a high RTG together with a low CTG does not reliably yield favorable return--cost trade-offs on challenging datasets with limited coverage of safe, high-performance behaviors.

This motivates augmenting the plain CDT backbone with data-aware mechanisms that reshape the effective training distribution and provide additional optimization signals beyond maximum likelihood. Specifically, we seek to bias learning toward trajectories with more desirable return--cost profiles while retaining the flexibility of target conditioning for multi-threshold, zero-shot deployment. Notably, the same proof technique also extends to multi-goal conditioning by viewing each conditioning coordinate as an additional scalar, reward-like signal that obeys a Markov-consistent recursion along transitions, leading to analogous coverage-dependent alignment bounds in higher-dimensional conditioning spaces. In the following, we instantiate this principle by introducing a regularized CDT variant that incorporates trajectory-level reweighting and value-based terms, aligning with a broader line of work showing that reweighting and regularization can substantially improve DT-style and CSM-based offline RL under distribution shift and data imbalance~\cite{brandfonbrener2022does,hu2024q,wang2024critic,bairebalancing}.

% This motivates augmenting the plain CDT backbone with data-aware mechanisms that reshape the effective training distribution and provide additional optimization signals beyond maximum likelihood. Specifically, we seek to bias learning toward trajectories with more desirable return--cost profiles while retaining the flexibility of target conditioning for multi-threshold, zero-shot deployment. In the following, we instantiate this principle by introducing a regularized CDT variant that incorporates trajectory-level reweighting and value-based terms, aligning with a broader line of work showing that reweighting and regularization can substantially improve DT-style and CSM-based offline RL under distribution shift and data imbalance~\cite{hu2024q,wang2024critic,bairebalancing}.

\subsection{\algw}\label{sec:weighted-q-guidance}

\begin{table}[t]
\centering
\small
\caption{Optimization objectives for DT-based CSM methods.}
\label{tab:objective-comparison}
\setlength{\tabcolsep}{6pt}
\renewcommand{\arraystretch}{1.3}
\resizebox{\textwidth}{!}{
\begin{tabular}{ll}
\toprule
\textbf{Method} & \textbf{Optimization Objective} \\
\midrule
Offline RL & \\
DT \cite{chen2021decision}
&
$
\mathcal{L}_{\text{DT}}(\theta) = - {\textstyle\sum_{\tau \in \mathcal{D}}} {\textstyle\sum_{1 \leq t \leq H}} \log \pi_{\theta}(a_t | s_t, \bar{R}_t, \bar{\tau}_{t-1}^{K} )
$
\\

QT \cite{hu2024q}
&
$
\mathcal{L}_{\text{DT}}(\theta)- \eta \mathbb{E}_{\tau \sim \mathcal{D}} \mathbb{E}_{s_t \sim \tau, a_t \sim \pi_{\theta}} [Q^{\pi_{\theta}}(s_t, a_t) ]
$
\\

RDT \cite{bairebalancing}
&
$
\mathcal{L}_{\text{DT}}(\theta) + \alpha \mathbb{E}_{\tau \sim \mathcal{D}_e} \sum_{i=t}^{H} \text{KL} \big[ \pi_{\theta} (\cdot | s_t, \bar{R}_t, \bar{\tau}_{t-1}^{K}) \Vert \pi^{e} (\cdot | s_t) \big]
$
\\
RVDT \cite{bairebalancing}
&
$ 
\mathcal{L}_{\text{DT}}(\theta) - \eta \mathbb{E}_{\tau \sim \mathcal{D}} \mathbb{E}_{s_t \sim \tau, a_t \sim \pi_{\theta}} [Q^{\pi_\theta}(s_t, a_t) ] 
+ \alpha \mathbb{E}_{\tau \sim \mathcal{D}_e} {\textstyle\sum_{t=1}^{H}} \text{KL} \big[ \pi_{\theta} (\cdot | s_t, \bar{R}_t, \bar{\tau}_{t-1}^{K}) \Vert \pi^{e} (\cdot | s_t) \big]
$
\\
\midrule
Offine Safe RL & \\
CDT \cite{liu2023constrained}
&
$
\mathcal{L}_{\text{CDT}}(\pi) = - {\textstyle\sum_{\tau \in \mathcal{D}}} {\textstyle\sum_{1 \leq t \leq H}} \log \pi(a_t | s_t, \bar{R}_t, \bar{C}_t,\bar{\tau}_{t-1}^{K})
$
\\
\algwb~(ours)
&
$
- {\sum_{\tau \in \mathcal{D}}}{\sum_{t=1}^{H}}W_{\tau} \log \pi_{\theta} \left(a_t | s_t, \bar{R}_t, \bar{C}_t, \bar{\tau}_{t-1}^{K}\right) - \eta \mathbb{E}_{\tau \sim \mathcal{D}} \mathbb{E}_{s_t \sim \tau, a_t \sim \pi_{\theta}} [Q^{\pi_\theta}(s_t, a_t) ]
$
\\
\algcb~(ours)
&
$
- {\sum_{\tau \in \mathcal{D}}}{\sum_{t=1}^{H}}W_{\tau} \log \pi_{\theta} \left(a_t | s_t, \bar{R}_t, \bar{C}_t, \bar{\tau}_{t-1}^{K}\right) - \eta \mathbb{E}_{\tau \sim \mathcal{D}} \mathbb{E}_{s_t \sim \tau, a_t \sim \pi_{\theta}} [Q^{\pi_\theta}(s_t, a_t) ] + \lambda \mathbb{E}_{s_t \sim \tau, a_t \sim \pi_{\theta}} [C^{\pi_\theta}(s_t, a_t) ]
$
\\
\bottomrule
\end{tabular}
}
\end{table}

Building on CDT, we introduce two data-aware training ingredients to improve algorithm performance while retaining the RTG/CTG conditioning interface for multi-threshold, zero-shot deployment.
Specifically, we (i) reshape the effective training distribution via trajectory-level reweighting toward more desirable return--cost profiles, and (ii) incorporate value-based guidance as an additional optimization signal to discourage overly conservative behaviors.
This design follows the technical progression summarized in Table~\ref{tab:objective-comparison}: it generalizes the sub-dataset resampling used in DT-style methods (e.g., RDT/RVDT) to a more flexible trajectory-level scheme, and adapts the Q-value regularization principle of QT to the constrained CDT setting.

Let $R(\tau)$ and $C(\tau)$ denote the cumulative return and cumulative cost of a trajectory $\tau \in \mathcal{D}$ respectively. We introduce a nonnegative weighting function
\begin{equation}
\mathcal{W} : \mathbb{R}^2 \to \mathbb{R}_{>0}, \quad W_{\tau} = \mathcal{W} \big(R(\tau), C(\tau)\big),
\end{equation}
which assigns larger weights to trajectories that exhibit more favorable return--cost trade-offs. It suffices to note that $W_{\tau}$ depends jointly on return and cost, and therefore encodes both performance and safety preferences at the trajectory level. Given these weights, consider the following weighted variant of the CDT loss:
\begin{equation}
\label{eq:wcdt-loss}
\mathcal{L}(\theta)=- {\sum_{\tau \in \mathcal{D}}}
{\sum_{t=1}^{H}}
W_{\tau}\,
\log \pi_{\theta}\left(a_t | s_t, \bar{R}_t, \bar{C}_t, \bar{\tau}_{t-1}^{K}\right).
\end{equation}
The loss \eqref{eq:wcdt-loss} preserves the behavior-cloning structure of CDT while explicitly rebalancing the empirical training distribution toward trajectories that are simultaneously high-return and low-cost.

To connect this formulation with existing DT-based methods, we draw a connection to the KL-based expert regularization used by RDT and RVDT in prior work \cite{bairebalancing}. Consider the RDT objective:
\begin{equation}
\label{eq:rdt-kl}
\mathcal{L}_{\text{RDT}}(\theta)
=
\mathbb{E}_{\tau \sim \mathcal{D}}
\Big[ \sum_{t=1}^{H}
-\log \pi_{\theta}\left(a_t | s_t, g(\tau_t), \bar{\tau}_{t-1}^{K}\right) \Big]
+
\alpha\,
\mathbb{E}_{\tau \sim \mathcal{D}_e}
\Big[\sum_{t=1}^{H}
\mathrm{KL}\left(
\pi_{\theta}(\cdot | s_t, g(\tau_t), \bar{\tau}_{t-1}^{K})
\,\Vert\,
\pi^{e}(\cdot | s_t)
\right)\Big],
\end{equation}
where $\mathcal{D}_e \subset \mathcal{D}$ is a subset of near-optimal trajectories and $\pi^{e}$ is an expert policy obtained from imitation learning on $\mathcal{D}_e$. The following result shows that the expert KL term implicitly induces a particular trajectory-level weighting scheme, and can therefore be seen as a special case of our general reweighting formulation.

\begin{proposition}[KL regularization as a special case of trajectory weighting]
\label{prop:kl-as-weighting}
Assume that $\pi_{\theta}$ is parameterized as a factorized Gaussian policy with a fixed standard deviation and that $\pi^{e}$ is fitted on $\mathcal{D}_e$ via maximum-likelihood imitation learning. Then the RDT objective in Eq.~\eqref{eq:rdt-kl} can be rewritten as a weighted DT loss:
\begin{equation}
\label{eq:weighted-dt-special}
\arg\min_{\pi_\theta} \mathcal{L}_{\text{RDT}}(\theta)
=
\arg\min_{\pi_\theta}
\mathbb{E}_{\tau \sim \mathcal{D}}
\Big[
W^{\text{RDT}}_{\tau}
\sum_{t=1}^{H}
-\log \pi_{\theta}\left(a_t | s_t, g(\tau_t), \bar{\tau}_{t-1}^{K}\right)
\Big],
\end{equation}
with trajectory weights
$W^{\text{RDT}}_{\tau}=1 + \alpha\, \mathbb{I}\left[\tau \in \mathcal{D}_e\right], \ \alpha \geq 0.$
\end{proposition}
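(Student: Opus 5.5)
The plan is to rewrite the expert KL term under the Gaussian assumption so that, up to additive constants that do not depend on $\theta$, it coincides with a per-sample negative log-likelihood on the trajectories of $\mathcal{D}_e$. Adding it to the first term then gives the weighted loss.

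\textbf{Step 1: reduce the KL to a squared distance of means.} Write $\pi_\theta(\cdot\mid x_t)=\mathcal{N}(\mu_\theta(x_t),\sigma^2 I)$ with $x_t=(s_t,g(\tau_t),\bar{\tau}_{t-1}^K)$. Assume $\pi^e(\cdot\mid s)=\mathcal{N}(\mu_e(s),\sigma^2 I)$ with the same fixed variance. The closed form of the Gaussian KL gives
$$\mathrm{KL}\bigl(\pi_\theta(\cdot\mid x_t)\,\Vert\,\pi^e(\cdot\mid s_t)\bigr)=\tfrac{1}{2\sigma^2}\|\mu_\theta(x_t)-\mu_e(s_t)\|^2 .$$
Likewise, $-\log\pi_\theta(a_t\mid x_t)=\tfrac{1}{2\sigma^2}\|a_t-\mu_\theta(x_t)\|^2+\mathrm{const}$.

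\textbf{Step 2: use that $\pi^e$ is the MLE fit on $\mathcal{D}_e$.} In the realizable, infinite-data idealisation used in Section~\ref{subsec:cmdp-theory}, maximum-likelihood fitting gives $\mu_e(s)=\mathbb{E}_{\mathcal{D}_e}[a\mid s]$. For a dataset action $a_t$ from $\tau\in\mathcal{D}_e$, expand
$$\|a_t-\mu_\theta\|^2=\|a_t-\mu_e\|^2+2\langle a_t-\mu_e,\mu_e-\mu_\theta\rangle+\|\mu_e-\mu_\theta\|^2 .$$
The first term does not depend on $\theta$. The cross term vanishes in expectation over $\mathcal{D}_e$ conditional on $s_t$, because $\mu_e$ is the conditional mean and $\mu_\theta$ depends on the conditioning only through $x_t$.

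Hence, up to a $\theta$-independent constant,
$$\mathbb{E}_{\tau\sim\mathcal{D}_e}\sum_t \mathrm{KL}(\cdot)=\mathbb{E}_{\tau\sim\mathcal{D}_e}\sum_t -\log\pi_\theta(a_t\mid x_t).$$

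\textbf{Step 3: merge the two expectations.} Write $\mathbb{E}_{\tau\sim\mathcal{D}_e}[f(\tau)]\propto\mathbb{E}_{\tau\sim\mathcal{D}}\bigl[\mathbb{I}[\tau\in\mathcal{D}_e]\,f(\tau)\bigr]$. The proportionality factor $|\mathcal{D}|/|\mathcal{D}_e|$ is absorbed into $\alpha$, which stays nonnegative. Adding the two terms gives $\mathbb{E}_{\tau\sim\mathcal{D}}\bigl[(1+\alpha\,\mathbb{I}[\tau\in\mathcal{D}_e])\sum_t-\log\pi_\theta\bigr]+\mathrm{const}$. Dropping the constant does not change the argmin, which proves the claim.

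\textbf{Main obstacle.} The delicate step is removing the cross term in Step 2. The dependence of $\mu_\theta$ on the extra conditioning $(g,\bar{\tau})$ means the cross term only vanishes when $a_t-\mu_e(s_t)$ is mean-zero given the full $x_t$, not just given $s_t$.

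My first attempt would be to state the equality at the level of the population objective under the realizable idealisation, with the expert residual uncorrelated with the extra context. If that fails, I would instead read ``fitted via maximum likelihood'' as replacing $\pi^e$ by the empirical action distribution on $\mathcal{D}_e$. The KL then becomes a cross-entropy against the dataset actions, and the identity is exact up to the entropy constant.
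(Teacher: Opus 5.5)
Your Step 2 is where the argument breaks, and the obstacle you flag is not a technicality. Under the population reading $\mu_e(s)=\mathbb{E}_{\mathcal{D}_e}[a\mid s]$, the cross term $\mathbb{E}\langle a_t-\mu_e(s_t),\mu_\theta(x_t)\rangle$ depends on $\theta$ whenever expert actions vary with $(g(\tau_t),\bar{\tau}_{t-1}^{K})$ at a fixed state. That is exactly what return/history conditioning exploits, and realizability does not help.

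The two argmins then genuinely differ. Take a single state, context $g\in\{-1,+1\}$ with equal probability, expert action $a=g$, and (for simplicity) $\mathcal{D}=\mathcal{D}_e$. The MLE expert has $\mu_e\equiv 0$. So the RDT objective is, up to a positive factor and an additive constant, $\mathbb{E}[(a-\mu_\theta(g))^2]+\alpha\,\mathbb{E}[\mu_\theta(g)^2]$, with minimizer $\mu_\theta(g)=g/(1+\alpha)$. The weighted loss $(1+\alpha)\,\mathbb{E}[(a-\mu_\theta(g))^2]$ is instead minimized by $\mu_\theta(g)=g$.

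Your \emph{first attempt} therefore needs an extra hypothesis, $\mathbb{E}_{\mathcal{D}_e}[a_t\mid x_t]=\mathbb{E}_{\mathcal{D}_e}[a_t\mid s_t]$. This must be mean independence, not mere uncorrelatedness, since $\mu_\theta$ is nonlinear in the context. It is not in the proposition and is typically false.

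The paper's proof is essentially your fallback, done in a way that avoids these problems. At each expert sample it takes $\pi^e(\cdot\mid s_t)$ to be $\mathcal{N}(a_t,\xi I)$, a Gaussian with the policy's covariance centred at the recorded action. Then $\mathrm{KL}=\frac{1}{2\xi}\|\mu_\theta(x_t)-a_t\|^2=-\log\pi_\theta(a_t\mid x_t)+\mathrm{const}$ sample by sample, and no cross term ever arises. It then merges the expectations as in your Step 3. Your observation that the factor $|\mathcal{D}|/|\mathcal{D}_e|$ must be absorbed into $\alpha$ is correct and is glossed over in the paper.

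You should phrase the fallback this way rather than via the raw empirical action distribution. Against an atomic $\hat\pi_e$, the KL with a Gaussian is $+\infty$ in either direction. Moreover, \emph{cross-entropy against dataset actions up to an entropy constant} is the formal decomposition of the reverse divergence $\mathrm{KL}(\hat\pi_e\Vert\pi_\theta)$, not of $\mathrm{KL}(\pi_\theta\Vert\pi^e)$ used in Eq.~\eqref{eq:rdt-kl}. With equal-covariance Gaussians the KL is finite and symmetric, which is why the smoothed version works.

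If you want the MLE hypothesis to do actual work, use the empirical fit rather than the population one. With an expressive mean network and distinct (continuous-space) expert states, maximum likelihood on $\mathcal{D}_e$ interpolates: $\mu_e(s_t)=a_t$ on every expert sample. Then $a_t-\mu_e(s_t)=0$, your Step 2 cross term vanishes identically, and you recover the paper's per-sample identity.
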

\noindent The proof is provided in Appendix~\ref{sec:proof2}.

Proposition~\ref{prop:kl-as-weighting} indicates that KL-based expert regularization effectively performs a weighted resampling of expert trajectories, where all trajectories in $\mathcal{D}_e$ share an increased weight and all others retain unit weight. In this sense, RDT is a special case of trajectory-level reweighting that typically constructed based on return ranking. Our formulation in Eq.~\eqref{eq:wcdt-loss} generalizes this mechanism by allowing $W_{\tau}$ to depend on both return and cost, and by instantiating it on the constrained CDT backbone. 
% Combined with the value guidance term in Eq.~\eqref{eq:algwb-loss}, \algwb~extends KL-based expert regularization by allowing return--cost dependent weights and applying it to the constrained CDT backbone.

In addition, prior results from QT \cite{hu2024q} and RVDT \cite{bairebalancing} indicate that value-based regularization can improve the robustness of DT-style policies on challenging offline datasets. Accordingly, we augment Eq.~\eqref{eq:wcdt-loss} with a value guidance term and obtain
\begin{equation}
\label{eq:algwb-loss}
\mathcal{L}_{\text{\algwb}}(\theta)
=
- {\sum_{\tau \in \mathcal{D}}}
{\sum_{t=1}^{H}}
W_{\tau}\,
\log \pi_{\theta}\left(a_t | s_t, \bar{R}_t, \bar{C}_t, \bar{\tau}_{t-1}^{K}\right)
-
\eta\,
\mathbb{E}_{\tau \sim \mathcal{D}}
\mathbb{E}_{s_t \sim \tau,\, a_t \sim \pi_{\theta}}
\left[ Q^{\pi_\theta}(s_t, a_t) \right],
\end{equation}
where $\eta > 0$ is a regularization coefficient and $Q^{\pi_\theta}$ denotes the action-value function of the current CSM policy $\pi_\theta$. We term the resulting algorithm as \algw~(\algwb). The first term in Eq.~\eqref{eq:algwb-loss} performs explicit trajectory-level return-cost reweighting on CDT, and the second term provides an optimization signal that encourages actions with higher long-term value under $\pi_\theta$. 

\subsection{\algc}
\label{sec:main-algorithm}

While \algwb already biases learning toward trajectories that are both high-return and low-cost, it lacks an explicit cost-penalization mechanism to further discourage unsafe behavior. A standard approach in safe reinforcement learning is to formulate policy learning as a constrained optimization problem in CMDPs and to introduce Lagrangian penalties to balance performance and safety. Following this perspective, we derive a cost-penalized training objective together with an adaptive update rule for its coefficient, while avoiding coupling CDT to a single fixed cost threshold.
To motivate the penalty term, consider minimizing the weighted CDT loss with value guidance subject to an upper bound $\kappa$ on the expected cumulative cost of the learned policy. A CMDP-style formulation can be written as:
\begin{equation}
\label{eq:algc-cmdp}
\begin{aligned}
\min_{\theta}\;&
- {\textstyle\sum_{\tau \in \mathcal{D}}}
{\textstyle\sum_{t=1}^{H}}
W_{\tau}\,
\log \pi_{\theta}\left(a_t \mid s_t, \bar{R}_t, \bar{C}_t, \bar{\tau}_{t-1}^{K}\right)
-
\eta\,
\mathbb{E}_{\tau \sim \mathcal{D}}
\mathbb{E}_{s_t \sim \tau,\, a_t \sim \pi_{\theta}}
\left[ Q^{\pi_\theta}(s_t, a_t) \right] \\
\text{s.t.}\;&
J_c(\theta) :=
\mathbb{E}_{\tau \sim \mathcal{D}}
\mathbb{E}_{s_t \sim \tau,\, a_t \sim \pi_{\theta}}
\left[ C^{\pi_\theta}(s_t, a_t) \right]
\leq \kappa,
\end{aligned}
\end{equation}
where $\kappa > 0$ denotes a target cost level. Introducing a Lagrange multiplier $\lambda \geq 0$ leads to the Lagrangian
\begin{equation}
\label{eq:algc-lagrangian}
\begin{aligned}
\mathcal{L}_{\text{lag}}(\theta, \lambda)
&=
- {\textstyle\sum_{\tau \in \mathcal{D}}}
{\textstyle\sum_{t=1}^{H}}
W_{\tau}\,
\log \pi_{\theta}\left(a_t \mid s_t, \bar{R}_t, \bar{C}_t, \bar{\tau}_{t-1}^{K}\right)
-
\eta\,
\mathbb{E}_{\tau \sim \mathcal{D}}
\mathbb{E}_{s_t \sim \tau,\, a_t \sim \pi_{\theta}}
\left[ Q^{\pi_\theta}(s_t, a_t) \right] \\
&\quad +
\lambda\Big(
\mathbb{E}_{\tau \sim \mathcal{D}}
\mathbb{E}_{s_t \sim \tau,\, a_t \sim \pi_{\theta}}
\left[ C^{\pi_\theta}(s_t, a_t) \right]
- \kappa
\Big).
\end{aligned}
\end{equation}
Eq.~\eqref{eq:algc-lagrangian} serves as a motivating formulation rather than as a strict optimisation problem: it suggests that a natural way to bias the policy towards safer behaviours is to augment the training loss with the expected cost term, weighted by an appropriate coefficient. 
Imposing this constraint as a hard requirement tied to a fixed $\kappa$ would also be at odds with the zero-shot adaptation capability of conditional sequence modeling, where a single CDT policy is expected to generalise across a range of cost thresholds. 
Instead, we treat the cost term as a flexible regulariser whose influence is controlled by an adaptive coefficient $\lambda$.

Finally, we arrive at our offline safe RL algorithm, referred to as \algc~(\algcb), by optimizing the following cost-penalized actor objective:
\begin{equation}
\label{eq:algc-loss}
\begin{aligned}
\mathcal{L}_{\text{\algcb}}(\theta)
&=
- {\textstyle\sum_{\tau \in \mathcal{D}}}
{\textstyle\sum_{t=1}^{H}}
W_{\tau}\,
\log \pi_{\theta}\left(a_t \mid s_t, \bar{R}_t, \bar{C}_t, \bar{\tau}_{t-1}^{K}\right)
-
\eta\,
\mathbb{E}_{\tau \sim \mathcal{D}}
\mathbb{E}_{s_t \sim \tau,\, a_t \sim \pi_{\theta}}
\left[ Q^{\pi_\theta}(s_t, a_t) \right] \\
&\quad +
\lambda\,
\mathbb{E}_{\tau \sim \mathcal{D}}
\mathbb{E}_{s_t \sim \tau,\, a_t \sim \pi_{\theta}}
\left[ C^{\pi_\theta}(s_t, a_t) \right],
\end{aligned}
\end{equation}
where we reinterpret $\lambda > 0$ as an adaptive regularization coefficient rather than an exact Lagrange multiplier associated with a fixed hard constraint, thereby using it to balance performance and safety.
To obtain such an adaptive update rule, we note that the derivative of the Lagrangian in Eq.~\eqref{eq:algc-lagrangian} with respect to $\lambda$ is proportional to $J_c(\theta) - \kappa$. This suggests adjusting $\lambda$ according to whether the current policy is above or below the target cost level. In practice, let $\hat{J}_c(\theta)$ denote a minibatch estimate of
\[
J_c(\theta) =
\mathbb{E}_{\tau \sim \mathcal{D}}
\mathbb{E}_{s_t \sim \tau,\, a_t \sim \pi_{\theta}}
\left[ C^{\pi_\theta}(s_t, a_t) \right].
\]
We update $\lambda$ using
\begin{equation}
\label{eq:lambda-update}
\lambda \leftarrow
\big[\lambda + \beta\big(\hat{J}_c(\theta) - \kappa\big)\big]_+,
\end{equation}
where $\beta > 0$ is a step size and $[\cdot]_+$ denotes projection onto the nonnegative reals. When the estimated expected cost $\hat{J}_c(\theta)$ exceeds $\kappa$, $\lambda$ is increased and the cost penalty in Eq.~\eqref{eq:algc-loss} becomes stronger; when the $\hat{J}_c(\theta)$ falls below $\kappa$, the penalty is gradually relaxed. This update follows the primal--dual intuition of adjusting a penalty coefficient based on the current level of constraint violation, and provides an automatic way to tune the strength of safety regularisation during training. Importantly, since the CDT architecture explicitly conditions on cost tokens, the resulting policy remains compatible with evaluation under multiple cost thresholds without retraining, and $\kappa$ should be interpreted as a training-time reference level, rather than a hard constraint that fixes a single deployment threshold.

In practice, the trajectory weights $W_{\tau}$ in Eq.~\eqref{eq:algc-loss} are instantiated by a return-cost shaped function that increases with return and decreases with cost relative to a prescribed limit. Specifically, we use
\begin{equation}
\label{eq:w-def}
W_\tau
=
\exp\big(\alpha\,R(\tau)\big)\,
\operatorname{sigmoid}\big(\gamma(c_{\text{lim}} - C(\tau))\big),
\end{equation}
where $\alpha$ and $\gamma$ control the sensitivity of $W_\tau$ to return and cost, and $c_{\text{lim}}$ denotes a trajectory-level cost reference. This choice is consistent with the general form $W_{\tau} = \mathcal{W}\big(R(\tau), C(\tau)\big)$ introduced in Section~\ref{sec:weighted-q-guidance}, and makes explicit how return and cost jointly shape the effective training distribution.

Figure~\ref{fig:weight} illustrates the effect of Eq.~\eqref{eq:w-def} on representative tasks. Each point corresponds to a trajectory in the offline dataset, with cumulative cost on the horizontal axis and return on the vertical axis; colour intensity indicates the magnitude of $W_{\tau}$ (darker means larger weight). As costs increase beyond the eference $c_{\text{lim}}$, trajectories are progressively down-weighted, whereas trajectories that achieve higher returns while remaining within (or near) the cost reference are emphasised. This visualisation highlights how trajectory-level return--cost reweighting reshapes the training signal toward safer and more performant regions of the offline data.

Putting these components together, \algcb~can be viewed as a CMDP-inspired extension of CDT within the CSM paradigm.
It retains the target-conditioning interface of CDT for multi-threshold, zero-shot evaluation, while augmenting maximum-likelihood training with three complementary ingredients: trajectory-level return--cost reweighting to emphasise favourable behaviours in the offline data, value-based guidance to refine the policy beyond pure imitation, and an explicit cost penalty whose coefficient is adaptively tuned to control safety in expectation.
In this way, \algcb~aligns with the DT-based objective design path summarised in Table~\ref{tab:objective-comparison}, while remaining directly applicable to zero-shot adaptation across multiple cost thresholds at evaluation time.

\begin{figure}[h]
    \centering
    \begin{subfigure}[t]{0.30\textwidth}
        \centering
        \includegraphics[width=\linewidth]{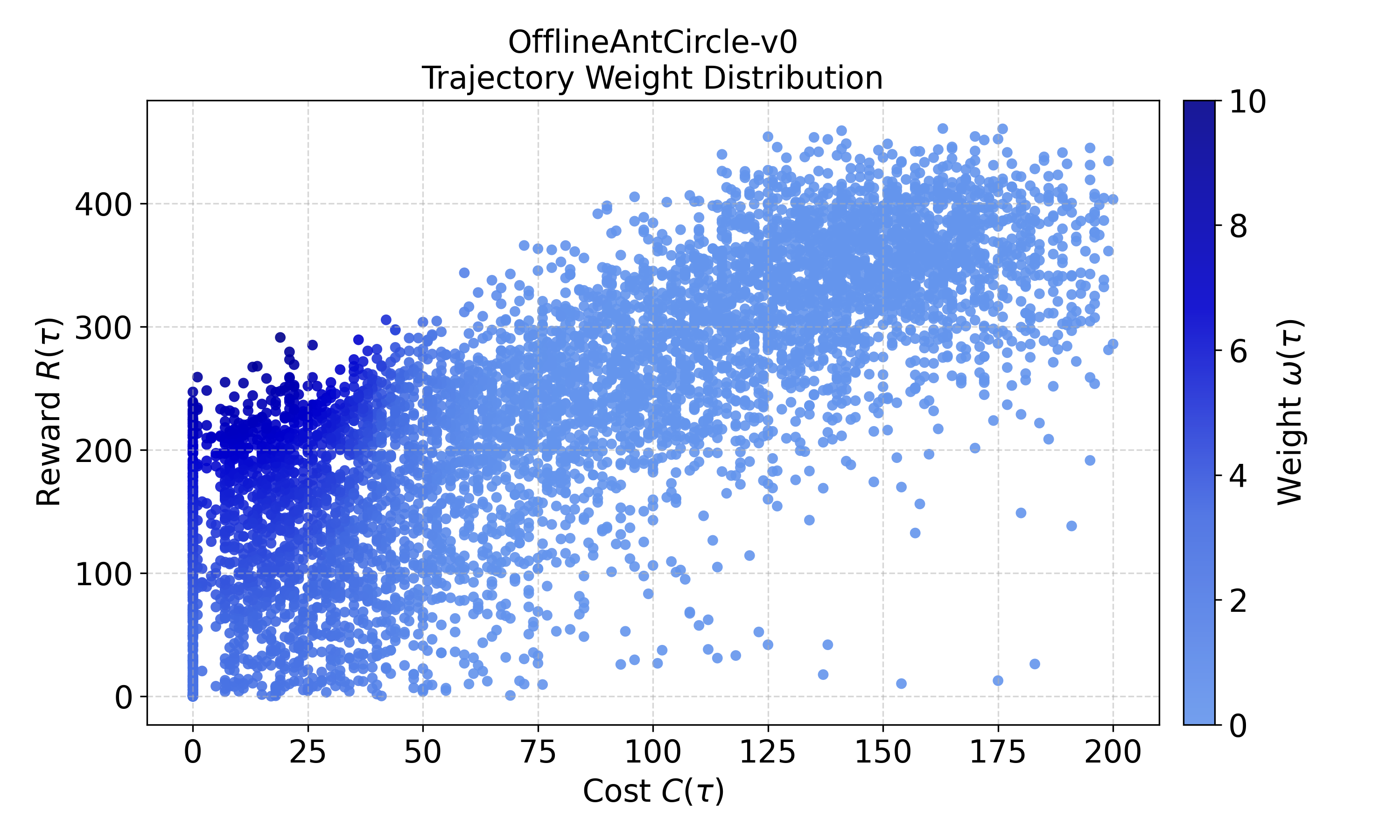}
        \caption{AntCircle}
    \end{subfigure}
    \hfill
    \begin{subfigure}[t]{0.30\textwidth}
        \centering
        \includegraphics[width=\linewidth]{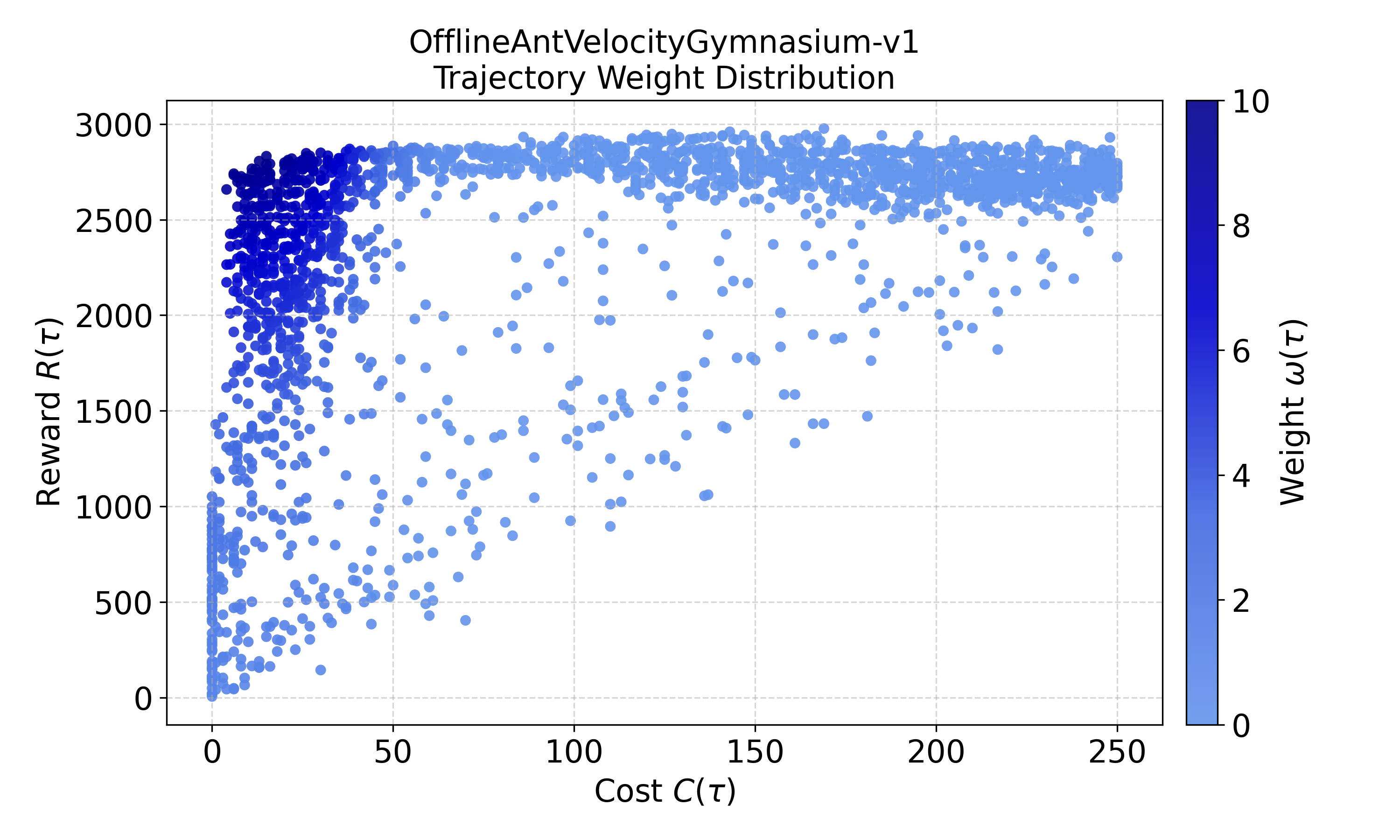}
        \caption{AntVelocity}
    \end{subfigure}
    \hfill
    \begin{subfigure}[t]{0.30\textwidth}
        \centering
        \includegraphics[width=\linewidth]{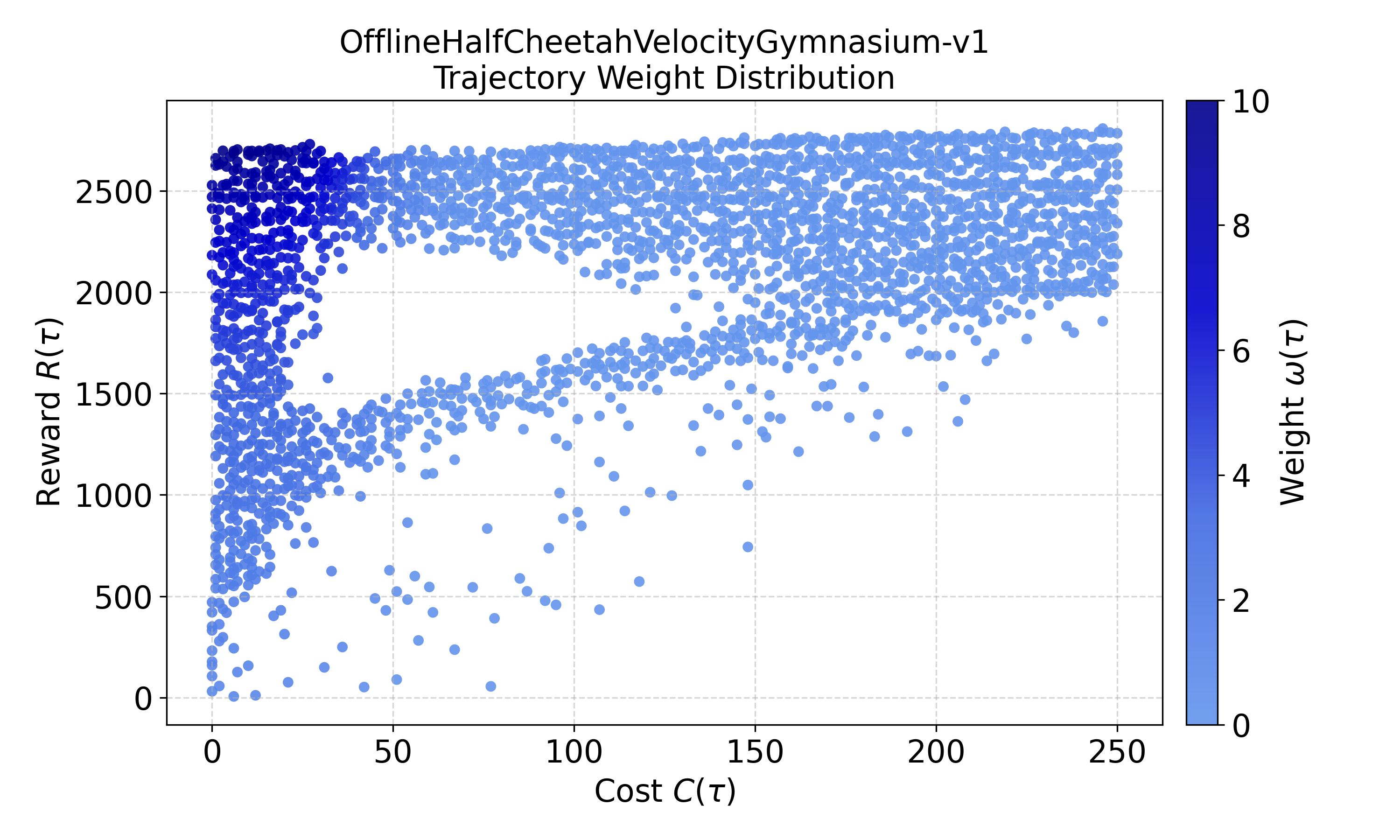}
        \caption{CheetahVelocity}
    \end{subfigure}
    \caption{
        Return--cost distributions of offline trajectories on representative tasks.
        Each point denotes a trajectory, with cumulative cost on the $x$-axis and return on the $y$-axis; colour intensity indicates the trajectory weight $W_{\tau}$ (darker means larger weight) defined in Eq.~\eqref{eq:w-def}.
    }
    \label{fig:weight}
\end{figure}

\subsection{Training and inference}
\label{sec:training-inference}

We instantiate \algcb~using a transformer-based architecture.
The policy $\pi_\theta$ is implemented as an autoregressive transformer that takes as input a context window of past states, actions, and target return-cost tokens, and outputs the parameters of a stochastic policy.
Following common practice in sequence-modeling RL, $\pi_\theta$ is modelled as a factorised Gaussian distribution whose mean and log-variance are predicted by separate fully connected output heads at each step of the sequence.
For a CDT policy $\pi_{\theta}\left(\cdot \mid s_i, \bar{R}_i, \bar{C}_i, \bar{\tau}_{i-1}^{K}\right)$ at timestep $i$, $\bar{R}_i$ and $\bar{C}_i$ denote the RTG and CTG tokens obtained by cumulatively summing future rewards and costs along the underlying trajectory, and $\bar{\tau}_{i-1}^{K}$ denotes the length-$K$ history of past states and actions. When $i-K < 1$, the context window $\bar{\tau}_{i-1}^{K}$ is truncated to the available prefix of the trajectory.

\begin{algorithm}[t]
    \caption{\algcb: \algc}\label{alg:RCDT}
    \KwIn{Sequence length $K$, dataset $\mathcal{D}$, hyperparameters $\alpha$, $\gamma$, $\eta$, $\beta$, cost reference $c_{\text{lim}}$, target cost $\kappa$}
    \KwInit{Policy $\pi_\theta$, reward critic $Q_{\psi}$, cost critic $C_{\xi}$, coefficient $\lambda \ge 0$}
    Pre-compute cumulative return $R(\tau)$ and cumulative cost $C(\tau)$ for each trajectory $\tau \in \mathcal{D}$\;
    \For{$t = 1$ \KwTo $T$}{
        Sample a trajectory $\tau$ from $\mathcal{D}$\;
        Compute trajectory weight $W_{\tau}$ via Eq.~\eqref{eq:w-def}\;

        \tcp{Critic update}
        Sample a length-$K$ subsequence $\{(s_i, a_i, r_i, c_i)\}_{i=1}^K$ from $\tau$\;
        Form CDT inputs $\{(s_i, \bar{R}_i, \bar{C}_i, \bar{\tau}_{i-1}^{K})\}_{i=1}^K$ from the subsequence\;
        Sample next action $\hat{a}_{K} \sim \pi_{\theta}(\cdot \mid s_K, \bar{R}_K, \bar{C}_K, \bar{\tau}^{K}_{K-1})$\;
        Update $Q_{\psi}$ via offline TD learning using $(s_{K-1}, a_{K-1}, r_{K-1}, s_{K}, \hat{a}_{K})$\;
        Update $C_{\xi}$ via offline TD learning using $(s_{K-1}, a_{K-1}, c_{K-1}, s_{K}, \hat{a}_{K})$\;

        \tcp{Policy update}
        \For{$i = 1$ \KwTo $K$}{
            Sample action $\hat{a}_i \sim \pi_{\theta}(\cdot \mid s_i, \bar{R}_i, \bar{C}_i, \bar{\tau}_{i-1}^{K})$\;
            Evaluate $Q_{\psi}(s_i, \hat{a}_i)$ and $C_{\xi}(s_i, \hat{a}_i)$\;
        }
        Update policy $\pi_\theta$ by minimising $\mathcal{L}_{\text{\algcb}}(\theta)$ in Eq.~\eqref{eq:algc-loss}
        with $W_{\tau}$, $Q_{\psi}$ and $C_{\xi}$\;

        \tcp{Adaptive update of the cost coefficient}
        Estimate $\hat{J}_c(\theta)$ as the average $C_{\xi}(s_i, \hat{a}_i)$ over the current subsequence\;
        Update $\lambda \leftarrow \big[\lambda + \beta\big(\hat{J}_c(\theta) - \kappa\big)\big]_+$\;
    }
    \Return $\pi_\theta$\;
\end{algorithm}

\textbf{Training.}
The overall training pipeline of \algcb~is summarised in Algorithm~\ref{alg:RCDT}.
Prior to training, the cumulative return $R(\tau)$ and cumulative cost $C(\tau)$ are computed for each trajectory $\tau \in \mathcal{D}$.
These statistics are used to instantiate the trajectory weights $W_\tau$ via Eq.~\eqref{eq:w-def}, which increase with return and decrease with cost relative to a reference limit $c_{\text{lim}}$.
During each training iteration, a trajectory $\tau$ is sampled from the dataset, a length-$K$ subsequence is extracted, and the corresponding CDT inputs is $(s_i, \bar{R}_i, \bar{C}_i, \bar{\tau}_{i-1}^{K})$.

Value learning is carried out using standard offline temporal-difference (TD) methods.
The reward critic $Q_{\psi}$ approximates the action-value function $Q^{\pi_\theta}$, while the cost critic $C_{\xi}$ approximates the cost-to-go function $C^{\pi_\theta}$.
Given a transition $(s_{K-1}, a_{K-1}, r_{K-1}, c_{K-1}, s_K)$ from the sampled subsequence, a next action $\hat{a}_K$ is drawn from the current policy
$\pi_{\theta}(\cdot \mid s_K, \bar{R}_K, \bar{C}_K, \bar{\tau}_{K-1}^{K})$, and TD targets are constructed using $(s_{K-1}, a_{K-1}, r_{K-1}, s_K, \hat{a}_K)$ and $(s_{K-1}, a_{K-1}, c_{K-1}, s_K, \hat{a}_K)$ for the reward and cost critics, respectively.
In practice, standard stabilisation techniques such as target networks~\cite{lillicrap2015continuous} and double Q-learning~\cite{fujimoto2018addressing} are employed for both $Q_{\psi}$ and $C_{\xi}$.

The policy parameters $\theta$ are updated by minimising the actor loss in Eq.~\eqref{eq:algc-loss}.
Concretely, for each CDT input in the subsequence, an action $\hat{a}_i$ is sampled from
$\pi_{\theta}(\cdot \mid s_i, \bar{R}_i, \bar{C}_i, \bar{\tau}_{i-1}^{K})$, and the corresponding critic evaluations $Q_{\psi}(s_i, \hat{a}_i)$ and $C_{\xi}(s_i, \hat{a}_i)$ are used to form a Monte Carlo estimate of the expectations in Eq.~\eqref{eq:algc-loss}.
The trajectory weight $W_\tau$ is applied to the negative log-likelihood term to realise the return-cost reweighting effect.
The critic-based terms provide value and cost guidance, and the combined stochastic gradient is used to update $\theta$ via gradient descent.

The coefficient $\lambda$ that controls the strength of the cost penalty is updated adaptively according to Eq.~\eqref{eq:lambda-update}.
At each iteration, an empirical estimate $\hat{J}_c(\theta)$ of
$
J_c(\theta) =
\mathbb{E}_{\tau \sim \mathcal{D}}
\mathbb{E}_{s_t \sim \tau,\, a_t \sim \pi_{\theta}}
\left[ C^{\pi_\theta}(s_t, a_t) \right]
$
is obtained by averaging $C_{\xi}(s_i, \hat{a}_i)$ over the sampled subsequence.
The update
$\lambda \leftarrow [\lambda + \beta(\hat{J}_c(\theta) - \kappa)]_+$ 
then increases $\lambda$ when the current policy exceeds the reference cost level $\kappa$, and decreases it otherwise.
As discussed in Section~\ref{sec:main-algorithm}, $\lambda$ is therefore interpreted as an adaptive regularisation coefficient rather than an exact Lagrange multiplier, and its update rule is inspired by Lagrangian dual ascent in safe RL.

\textbf{Inference.}
The inference procedure of \algcb~closely follows that of CDT and other CSM methods, with the additional capability of conditioning on cost.
At each decision step, the model receives four input sequences: the target RTG, the target CTG, the past states, and the past actions.
Given these inputs, the transformer autoregressively predicts a sequence of $K$ actions, of which only the final action in the window is executed in the environment.

After executing the selected action and observing the realised reward and cost, both the RTG and CTG tokens are updated for the next timestep.Specifically, the target RTG is decreased by the observed reward, and the target CTG is decreased by the observed cost, mirroring the cumulative structure used to construct $\bar{R}_t$ and $\bar{C}_t$ in the offline dataset. These updated tokens are appended to the conditioning sequence and fed back into the transformer to generate the next action sequence.
Different evaluation thresholds on the cumulative cost are imposed by choosing different initial CTG values without retraining the model.
In this way, \algcb~preserves the zero-shot adaptation capability of CDT across a range of safety requirements, while leveraging trajectory-level reweighting, value guidance, and cost regularisation to produce policies that are both high-performing and safety-aware.

\section{Experiments} \label{sec:exp}
We conduct a comprehensive empirical evaluation of \algcb~on the DSRL benchmark~\cite{liu2023datasets}. 
In Section~\ref{sec:exp1}, we compare \algcb~against a broad range of representative offline safe RL baselines spanning multiple methodological paradigms, including imitation learning~\cite{koiralalatent}, value-based methods~\cite{fujimoto2019off,kumar2019stabilizing,xu2022constraints}, CSM-based approaches~\cite{liu2023constrained}, distribution-matching methods~\cite{lee2022coptidice}, as well as data-quality and feasible-region methods~\cite{gong2025offline,zhengsafe}. 
In Section~\ref{sec:exp3}, ablation studies are conducted to analyze how each key component of \algcb~affects the safety--performance trade-off, focusing on the Lagrangian-style cost penalty, trajectory-level reweighting, and Q-value regularization. Besides, To further assess robustness of \algcb~ under more demanding evaluation settings, dataset variants from the original DSRL benchmark with (\romannumeral1) a lower proportion of expert-level trajectories and (\romannumeral2) more imbalanced data distributions are considered.

\textbf{Tasks.}
We evaluate \algcb~on the DSRL benchmark~\cite{liu2023datasets}, a public benchmark that includes a variety of continuous-control and robot locomotion tasks commonly used in prior offline safe RL research~\cite{guo2025constraint,lin2023safe,xu2022constraints,yao2024oasis,zhengsafe,gong2025offline,koiralalatent}. 
DSRL provides offline datasets collected from three widely used safety-critical simulators, including Mujoco-based SafetyGym~\cite{ji2023safety}, PyBullet-based BulletSafetyGym~\cite{gronauer2022bullet}, and the autonomous-driving simulator MetaDrive~\cite{li2022metadrive}.
In \emph{SafetyGym}, we consider \texttt{goal-reaching}, \texttt{pushing}, and \texttt{circle-tracking} tasks with two agent morphologies (\texttt{Point} and \texttt{Car}). 
The agent is rewarded for completing the intended objective, such as reaching a target location, pushing an object to a goal, or tracking a circular route, while safety cost is incurred when the agent enters hazardous regions and, for circle-style tasks, when it violates the safety boundary constraints. 
We additionally include velocity-constrained locomotion tasks with standard Mujoco agents, including \texttt{Swimmer}, \texttt{Hopper}, \texttt{HalfChee}, \texttt{Walker2d}, and \texttt{Ant}. 
In these tasks, reward encourages forward progress, whereas cost is triggered when the agent's speed exceeds a predefined limit.
\emph{BulletSafetyGym} provides analogous safety-critical locomotion tasks with different robot morphologies. 
We evaluate on \texttt{Run} and \texttt{Circle} tasks with multiple robots (\texttt{Ball}, \texttt{Car}, \texttt{Drone}, and \texttt{Ant}). 
In \texttt{Run}, the agent is rewarded for fast forward motion within boundaries, but incurs cost when leaving the corridor or exceeding a velocity cap. 
In \texttt{Circle}, the agent is rewarded for clockwise circular motion while being penalized for leaving a designated safe region.
Finally, in \emph{MetaDrive} we evaluate offline safe driving datasets with increasing scenario difficulty and traffic density (e.g., \texttt{Easy/Medium/Hard} combined with \texttt{Sparse/Mean/Dense}), which capture realistic safety--performance tensions in interactive driving. 
Here reward encourages route progress and stable driving, while safety cost is issued for events such as collisions and driving out of the road.
Across all suites, reward promotes task completion, whereas cost penalizes safety violations. 
This inherent reward--cost conflict makes it necessary to balance return maximization against cumulative cost constraints: aggressively pursuing high reward can lead to constraint violations, while overly conservative policies tend to satisfy constraints at the expense of performance.

\textbf{Baselines.} We compare \algcb~against the following baselines.
(\romannumeral1) Imitation learning: Behavior Cloning (BC) and its safety-aware variant BC-Safe~\cite{koiralalatent}.
(\romannumeral2) Value-based with constraint handling: CPQ~\cite{xu2022constraints}, BCQ-Lag, and BEAR-Lag, where the latter two are safety-constrained variants of BCQ~\cite{fujimoto2019off} and BEAR~\cite{kumar2019stabilizing} obtained by incorporating Lagrangian penalties.
(\romannumeral3) CSM-based: Constrained Decision Transformer (CDT)~\cite{liu2023constrained}, a representative CSM method that learns conditioned policies.
(\romannumeral4) Distribution matching: COptiDICE~\cite{lee2022coptidice}, which performs constrained policy optimization via occupancy-measure matching in the DICE family~\cite{lee2021optidice}.
(\romannumeral5) Data-quality and feasible-region methods: TraC~\cite{gong2025offline}, and FISOR~\cite{zhengsafe}, which explicitly reason about safe regions or trajectory-level feasibility.

\textbf{Implementation.}
Our implementation of \algcb~is built upon the CDT implementation provided in the OSRL\footnote{https://github.com/liuzuxin/OSRL/tree/main} project repository released by the CDT authors. 
We reproduce CDT using this codebase and its recommended hyperparameters. 
For the remaining baselines, we either run their publicly available implementations or report results from the original papers or representative prior work~\cite{koiralalatent,suboundary,gong2025offline,guo2025constraint}, depending on availability.
Additional implementation details are provided in Appendix~\ref{sec:imple-details}.

\textbf{Evaluation.}
We follow the standard \emph{constraint-variation evaluation} protocol for offline safe RL~\cite{liu2023datasets,koiralalatent,gong2025offline}, which assesses an algorithm's robustness and versatility under different levels of safety constraints. 
Concretely, each method is evaluated on every dataset under three target cumulative-cost thresholds, i.e., $\{20,40,80\}$ for SafetyGym, $\{10,20,40\}$ for BulletSafetyGym, and $\{10,20,40\}$ for MetaDrive, and the final score is obtained by averaging over these thresholds.
For methods that can be evaluated under multiple thresholds without retraining (BC, BC-Safe, CDT, and our \algcb), we train a single policy per dataset and evaluate its behavior across all thresholds in a zero-shot manner. 
Notably, although CDT and \algcb~involve a target cost level $\kappa$ during training, which is set as a fixed hyperparameter for each dataset, the learned policies remain CTG-conditioned and are evaluated across all target thresholds without additional retraining. 
By contrast, for threshold-dependent baselines whose training objective explicitly depends on the target constraint level, including BCQ-Lag, BEAR-Lag, COptiDICE, CPQ, FISOR, and TraC, we train separate models for different thresholds following their original training pipelines and the benchmark protocol to align with their intended training settings.
Overall, this protocol provides a comprehensive assessment of the return--cost trade-off across a range of cost requirements, and it also directly validates the zero-shot deployment capability of \algcb~under varying cost thresholds.

\textbf{Metrics.}
We report the \emph{normalized return} and \emph{cumulative cost} as the comparison metrics, 
consistent with prior offline safe RL literature~\cite{liu2023datasets,koiralalatent,gong2025offline}. Let $R_{\max}(\mathcal{T})$ and $R_{\min}(\mathcal{T})$ denote the maximum and minimum returns observed in dataset $\mathcal{T}$, respectively. 
The normalized return is computed by:
$$
R_{\text{normalized}}
= \frac{R_{\pi} - R_{\min}(\mathcal{T})}{R_{\max}(\mathcal{T}) - R_{\min}(\mathcal{T})},
$$
where $R_{\pi}$ denotes the evaluated return of policy $\pi$. 
The normalized cumulative cost is defined by the ratio between the evaluated cumulative cost $C_{\pi}$ and the target threshold $\zeta$, i.e. $C_{\text{normalized}}=C_\pi\big / \zeta $ with $\zeta > 0$, noting that all costs in our setting are non-negative. 

\begin{table}[t]
\centering
\caption{
Performance comparison across DSRL tasks. For each task, \algcb~reports the averaged normalized scores over three cost thresholds and 5 random seeds.
\textcolor{gray}{Gray}: Unsafe agents.  \textbf{Bold}: Safe agents with cost $< 1$.  \textcolor{blue}{Blue}: Safe agents with the highest reward.
$\uparrow$ means the higher the better. $\downarrow$ means the lower the better. }
\label{tab:main-result}
\resizebox{\textwidth}{!}{
\begin{tabular}{l|cc|cc|cc|cc|cc|cc|cc|cc|cc|cc}
\toprule
\multirow{2}{*}{Task} &
\multicolumn{2}{c}{\textbf{BC-ALL}} &
\multicolumn{2}{c}{\textbf{BCQ-Lag}} &
\multicolumn{2}{c}{\textbf{BEAR-Lag}} &
\multicolumn{2}{c}{\textbf{COptiDICE}} &
\multicolumn{2}{c}{\textbf{BC-Safe}} &
\multicolumn{2}{c}{\textbf{CPQ}} &
\multicolumn{2}{c}{\textbf{FISOR}} &
\multicolumn{2}{c}{\textbf{CDT}} &
\multicolumn{2}{c}{\textbf{TraC}} &
\multicolumn{2}{c}{\textbf{\algcb}} \\
\cmidrule(lr){2-3}
\cmidrule(lr){4-5}
\cmidrule(lr){6-7}
\cmidrule(lr){8-9}
\cmidrule(lr){10-11}
\cmidrule(lr){12-13}
\cmidrule(lr){14-15}
\cmidrule(lr){16-17}
\cmidrule(lr){18-19}
\cmidrule(lr){20-21}
& rew $\uparrow$ & cost $\downarrow$
& rew $\uparrow$ & cost $\downarrow$
& rew $\uparrow$ & cost $\downarrow$
& rew $\uparrow$ & cost $\downarrow$
& rew $\uparrow$ & cost $\downarrow$
& rew $\uparrow$ & cost $\downarrow$
& rew $\uparrow$ & cost $\downarrow$
& rew $\uparrow$ & cost $\downarrow$
& rew $\uparrow$ & cost $\downarrow$
& rew $\uparrow$ & cost $\downarrow$ \\
\midrule
\multicolumn{21}{l}{\textbf{Safety Gym:}} \\
PointCircle1 & 0.79 & 3.98 & 0.54 & 2.38 & 0.73 & 3.28 & 0.86 & 5.51 & \textbf{0.41} & \textbf{0.16} & \textbf{0.43} & \textbf{0.75} & 0.43 & 14.93 & \textbf{0.57} & \textbf{0.67} & \textbf{0.5} & \textbf{0.07} & \textcolor{blue}{\textbf{0.58}} & \textcolor{blue}{\textbf{0.56}}  \\
PointCircle2 & 0.66 & 4.17 & 0.66 & 2.60 & 0.63 & 4.27 & 0.85 & 8.61 & \textbf{0.48} & \textbf{0.99} & 0.24 & 3.58 & 0.76 & 18.02 & \textbf{0.61} & \textbf{0.85} &  \textbf{0.61} &  \textbf{0.86} & \textcolor{blue}{\textbf{0.63}} & \textcolor{blue}{\textbf{0.99}} \\
PointGoal1 & \textbf{0.65} & \textbf{0.95} & \textbf{0.71} & \textbf{0.98} & 0.74 & 1.18 & 0.49 & 1.66 & \textbf{0.43} & \textbf{0.54} & \textbf{0.57} & \textbf{0.35} & 0.64 & 5.38 & \textbf{0.62} & \textbf{0.78} & \textbf{0.44} & \textbf{0.36} & \textcolor{blue}{\textbf{0.68}} & \textcolor{blue}{\textbf{0.79}} \\
PointGoal2 & 0.54 & 1.97 & 0.67 & 3.18 & 0.67 & 3.11 & 0.38 & 1.92 & \textbf{0.29} & \textbf{0.78} & 0.40 & 1.31 & 0.31 & 1.67 & 0.54 & 2.01 & \textbf{0.31} & \textbf{0.59} & 0.61 & 1.78 \\
PointPush1 & \textbf{0.19} & \textbf{0.61} & \textbf{0.33} & \textbf{0.86} & \textbf{0.22} & \textbf{0.79} & \textbf{0.13} & \textbf{0.83} & \textbf{0.13} & \textbf{0.43} & \textbf{0.20} & \textbf{0.83} & \textbf{0.26} & \textbf{0.09} & 0.28 & 1.11 & \textbf{0.15} & \textbf{0.42} & \textcolor{blue}{\textbf{0.33}} & \textcolor{blue}{\textbf{0.95}}\\
PointPush2 & \textbf{0.18} & \textbf{0.91} & \textbf{0.23} & \textbf{0.99} & \textbf{0.16} & \textbf{0.89} & 0.02 & 1.18 & \textbf{0.11} & \textbf{0.80} & 0.11 & 1.04 & \textbf{0.24} & \textbf{0.34} & 0.23 & 1.69 & \textbf{0.15} &  \textbf{0.8} & \textcolor{blue}{\textbf{0.24}} & \textcolor{blue}{\textbf{0.99}}\\
CarCircle1 & 0.72 & 4.39 & 0.73 & 5.25 & 0.76 & 5.46 & 0.70 & 5.72 & 0.37 & 1.38 & 0.02 & 2.29 & 0.69 & 10.52 & 0.49 & 1.78 & 0.52 & 1.85 & 0.57 & 2.14\\
CarCircle2 & 0.76 & 6.44 & 0.72 & 6.58 & 0.74 & 6.28 & 0.77 & 7.99 & 0.54 & 3.38 & 0.44 & 2.69 & 0.63 & 12.78 & 0.61 & 3.45 & 0.59 & 2.33 & 0.62 & 2.46 \\
CarGoal1    &  \textbf{0.39 } & \textbf{0.33 } & \textbf{0.47 } & \textbf{0.78 } & 0.61  & 1.13  &  \textbf{0.35 }& \textbf{0.54 } & \textbf{0.24 }& \textbf{0.28 }& 0.79 & 1.42 & \textbf{0.49 }& \textbf{0.12 }& 0.59 & 1.01 & \textbf{0.38} & \textbf{0.39} & \textcolor{blue}{\textbf{0.61}} & \textcolor{blue}{\textbf{0.99}} \\
CarGoal2    &  0.23  & 1.05  & 0.3   & 1.44  & 0.28  & 1.01  &  \textbf{0.25} & \textbf{0.91}  & \textbf{0.14} & \textbf{0.51} & 0.65 & 3.75 & \textbf{0.06} & \textbf{0.05} & 0.44 & 1.61 & \textbf{0.19} & \textbf{0.52} & 0.48 & 1.62 \\
CarPush1    &  \textbf{0.22}  & \textbf{0.36}  & \textbf{0.23}  & \textbf{0.43}  & \textbf{0.21}  & \textbf{0.54}  &  \textbf{0.23} & \textbf{0.5 }  & \textbf{0.14} & \textbf{0.33} & \textbf{-0.03} & \textbf{0.95} & \textbf{0.28} & \textbf{0.04} & \textbf{0.28} & \textbf{0.60} &  \textbf{0.19} &  \textbf{0.18} & \textcolor{blue}{\textbf{0.34}} & \textcolor{blue}{\textbf{0.63}} \\
CarPush2    &  \textbf{0.14}  &  \textbf{0.9 } & 0.15  & 1.38  & 0.1   & 1.2   & 0.09  & 1.07  & \textbf{0.05} & \textbf{0.45} & 0.24 & 4.25 & \textbf{0.14} & \textbf{0.13} & 0.18 & 2.22 & \textbf{0.08} & \textbf{0.54} & \textcolor{blue}{\textbf{0.18}} & \textcolor{blue}{\textbf{0.98}} \\
SwimmerVel  &  0.49  & 4.72  & 0.48  & 6.58  & 0.3   & 2.33  & 0.63  & 7.58  & 0.51 & 1.07 & 0.13 & 2.66 & \textbf{-0.04} & \textbf{0.00 }& \textbf{0.63} & \textbf{0.30} & 0.55  & 3.21 & \textcolor{blue}{\textbf{0.68}} & \textcolor{blue}{\textbf{0.46}} \\
HopperVel   &  0.65  &  6.39 & 0.78  & 5.02  & 0.34  & 5.86  & 0.13  & 1.51  & \textbf{0.36} & \textbf{0.67} & 0.14 & 2.11 & \textbf{0.17} & \textbf{0.32} & \textbf{0.61} & \textbf{0.22} & \textbf{0.57} & \textbf{0.98} & \textcolor{blue}{\textbf{0.80}} & \textcolor{blue}{\textbf{0.45}} \\
HalfCheeVel &  0.97  &  13.1 & 1.05  & 18.21 & 0.98  & 6.58  & \textbf{0.65}  & \textbf{0.0 }  & \textbf{0.88} & \textbf{0.54} & \textbf{0.29} & \textbf{0.74} & \textbf{0.89} & \textbf{0.00} & \textbf{0.97} & \textbf{0.04} & 0.96 & 2.5 & \textcolor{blue}{\textbf{0.98}} & \textcolor{blue}{\textbf{0.08}} \\
Walker2dVel &  0.79  &  3.88 & \textbf{0.79}  & \textbf{0.17}  & 0.86  &  3.1  & \textbf{0.12}  & \textbf{0.74}  & \textbf{0.79} & \textbf{0.04} & \textbf{0.04} & \textbf{0.21} & \textbf{0.38} & \textbf{0.36} & \textbf{0.79} & \textbf{0.03} & \textbf{0.64} & \textbf{0.06} & \textcolor{blue}{\textbf{0.80}} & \textcolor{blue}{\textbf{0.01}} \\
AntVel      &  0.98  &  3.72 & 1.02  & 4.15  & \textbf{-1.01} &  \textbf{0.0  }& 1.0   & 3.28  & \textbf{0.98 }& \textbf{0.29 }& \textbf{-1.01} & \textbf{0.00 }& \textbf{0.89 }& \textbf{0.00 }& \textbf{0.95} & \textbf{0.18} & \textbf{0.97} & \textbf{0.15} & \textcolor{blue}{\textbf{0.98}} & \textcolor{blue}{\textbf{0.19}} \\
\midrule
\textbf{Average} & 0.55 & 3.40 & 0.58 & 3.59 & 0.43 & 2.77 & 0.45 & 2.91 & \textbf{0.40} & \textbf{0.74} & 0.21 & 1.70 & 0.42 & 3.81 & 0.55 & 1.09 & \textbf{0.46} & \textbf{0.93} & \textcolor{blue}{\textbf{0.59}} & \textcolor{blue}{\textbf{0.95}} \\
\midrule
\multicolumn{21}{l}{\textbf{Bullet Safety Gym:}} \\
BallRun      &  0.6  &  5.08 & 0.76  & 3.91  & -0.47 & 5.03  & 0.59 & 3.52  & 0.27 & 1.46 & 0.22 & 1.27 & \textbf{0.18} & \textbf{0.00} & \textbf{0.37} & \textbf{0.62} & \textbf{0.27} & \textbf{0.49} & \textcolor{blue}{\textbf{0.39}} & \textcolor{blue}{\textbf{0.64}} \\
CarRun       &  \textbf{0.97} &  \textbf{0.33} & \textbf{0.94}  & \textbf{0.15}  & 0.68  & 7.78  & \textbf{0.87} & \textbf{0.0 }  & \textbf{0.94} & \textbf{0.22} & 0.95 & 1.79 & \textbf{0.73} & \textbf{0.04} & \textbf{0.98} & \textbf{0.54} & \textbf{0.97} & \textbf{0.1 }& \textcolor{blue}{\textbf{0.99}} & \textcolor{blue}{\textbf{0.29}} \\
DroneRun     &  0.24 &  2.13 & 0.72  & 5.54  & 0.42  & 2.47  & 0.67 & 4.15  & \textbf{0.28} & \textbf{0.74} & 0.33 & 3.52 & \textbf{0.30} & \textbf{0.16} & \textbf{0.57} & \textbf{0.39} & \textbf{0.55} & \textbf{0.01} & \textcolor{blue}{\textbf{0.58}} & \textcolor{blue}{\textbf{0.00}} \\
AntRun       & 0.7   &  2.93 & 0.76  & 5.11  & \textbf{0.15}  & \textbf{0.73}  & \textbf{0.61} & \textbf{0.94}  & 0.65 & 1.09 & \textbf{0.03} & \textbf{0.02} & \textbf{0.45} & \textbf{0.00} & \textbf{0.62} & \textbf{0.56} & \textbf{0.69} & \textbf{0.76} & \textcolor{blue}{\textbf{0.71}} & \textcolor{blue}{\textbf{0.74}} \\
BallCircle   & 0.74  &  4.71 & 0.69  & 2.36  & 0.86  & 3.09  & 0.7  & 2.61  & \textbf{0.52} & \textbf{0.65} & \textbf{0.64} & \textbf{0.76} & \textbf{0.34} & \textbf{0.00} & \textbf{0.58} & \textbf{0.94} & \textcolor{blue}{\textbf{0.69}} & \textcolor{blue}{\textbf{0.66}} & \textbf{0.59} & \textbf{0.95} \\
CarCircle    & 0.58  &  3.74 & 0.63  & 1.89  & 0.74  & 2.18  & 0.49 & 3.14  & \textbf{0.50} & \textbf{0.84} & \textbf{0.71} & \textbf{0.33} & \textbf{0.40} & \textbf{0.03} & \textbf{0.72} & \textbf{0.63} & \textbf{0.61} & \textbf{0.87} & \textcolor{blue}{\textbf{0.74}} & \textcolor{blue}{\textbf{0.65}} \\
DroneCircle  & 0.72  &  3.03 & 0.8   & 3.07  & 0.78  & 3.68  & 0.26 & 1.02  & \textbf{0.56} & \textbf{0.57} & -0.22 & 1.28 & \textbf{0.48} & \textbf{0.00} & \textbf{0.61} & \textbf{0.96} & \textbf{0.6}& \textbf{0.66} & \textcolor{blue}{\textbf{0.62}} & \textcolor{blue}{\textbf{0.95}} \\
AntCircle    & 0.58  &  4.9  & 0.58  & 2.87  & 0.65  & 5.48  & 0.17 & 5.04  & \textbf{0.40} & \textbf{0.96} & \textbf{0.00} & \textbf{0.20} & \textbf{0.20} & \textbf{0.00} & 0.45 & 1.54 &  \textcolor{blue}{\textbf{0.49}} & \textcolor{blue}{\textbf{0.91}} & 0.45 & 1.19 \\
\midrule
\textbf{Average} & 0.64 & 3.36 & 0.73 & 3.11 & 0.48 & 3.81 & 0.54 & 2.55 & \textbf{0.52} & \textbf{0.82} & 0.33 & 1.15 & \textbf{0.39} & \textbf{0.03} & \textbf{0.61} & \textbf{0.77} & \textbf{0.61} & \textbf{0.56} & \textcolor{blue}{\textbf{0.63}} & \textcolor{blue}{\textbf{0.68}} \\
\midrule
\multicolumn{21}{l}{\textbf{MetaDrive:}} \\
EasySparse & 0.17 & 1.54 & 0.78 & 5.01 & \textbf{0.11} & \textbf{0.86} & 0.96 & 5.44 & \textbf{0.11} & \textbf{0.21} & \textbf{-0.06} & \textbf{0.07 }& \textbf{0.38} & \textbf{0.15} & \textbf{0.13} & \textbf{0.25} & \textbf{0.45} &  \textbf{0.52} & \textcolor{blue}{\textbf{0.82}} & \textcolor{blue}{\textbf{0.96}} \\
EasyMean & 0.43 & 2.82 & 0.71 & 3.44 & \textbf{0.08} & \textbf{0.86} & 0.66 & 3.97 & \textbf{0.04} & \textbf{0.29} & \textbf{-0.07} & \textbf{0.07 }& \textbf{0.38} & \textbf{0.08} & \textbf{0.47} & \textbf{0.62} &  \textbf{0.4}& \textbf{0.47} & \textcolor{blue}{\textbf{0.74}} & \textcolor{blue}{\textbf{0.69}} \\
EasyDense & 0.27 & 1.94 & \textbf{0.26} & \textbf{0.47} & \textbf{0.02} & \textbf{0.41} & 0.50 & 2.54 & \textbf{0.11} & \textbf{0.14} & \textbf{-0.06} & \textbf{0.03 }& \textbf{0.36} & \textbf{0.08} & \textbf{0.36} & \textbf{0.89} &  \textbf{0.37} &  \textbf{0.41} & \textcolor{blue}{\textbf{0.69}} & \textcolor{blue}{\textbf{0.69}} \\
MediumSparse & 0.83 & 3.34 & 0.44 & 1.16 & \textbf{-0.03} & \textbf{0.17} & 0.71 & 2.49 & \textbf{0.33} & \textbf{0.30} & \textbf{-0.08} & \textbf{0.07 }& \textbf{0.42} & \textbf{0.07} & 0.73 & 1.27 & \textbf{0.8 }& \textbf{0.53} & \textcolor{blue}{\textbf{0.91}} & \textcolor{blue}{\textbf{0.98}} \\
MediumMean & 0.77 & 2.53 & 0.78 & 1.53 & \textbf{0.00} & \textbf{0.34} & 0.76 & 2.05 & \textbf{0.31} & \textbf{0.21} & \textbf{-0.08} & \textbf{0.05 }& \textbf{0.39} & \textbf{0.02} & \textbf{0.45} & \textbf{0.89} & \textbf{0.74} & \textbf{0.58} & \textcolor{blue}{\textbf{0.86}} & \textcolor{blue}{\textbf{0.52}} \\
MediumDense & 0.45 & 1.47 & 0.58 & 1.89 & \textbf{0.01} & \textbf{0.28} & 0.69 & 2.24 & \textbf{0.24} & \textbf{0.17} & \textbf{-0.07} & \textbf{0.07 }& \textbf{0.49} & \textbf{0.12} & 0.82 & 2.51 & \textbf{0.75} &  \textbf{0.58} & \textcolor{blue}{\textbf{0.89}} & \textcolor{blue}{\textbf{0.52}} \\
HardSparse & 0.42 & 1.80 & 0.50 & 1.02 & \textbf{0.01} & \textbf{0.16} & 0.37 & 2.05 & 0.17 & 3.25 & \textbf{-0.05} & \textbf{0.06 }& \textbf{0.30} & \textbf{0.00} & \textbf{0.22} & \textbf{0.45} & \textbf{0.43} & \textbf{0.64} & \textcolor{blue}{\textbf{0.44}} & \textcolor{blue}{\textbf{0.64}} \\
HardMean & 0.20 & 1.77 & 0.47 & 2.56 & \textbf{0.00} & \textbf{0.21} & 0.32 & 2.47 & \textbf{0.13} & \textbf{0.40} & \textbf{-0.05} & \textbf{0.06 }& \textbf{0.26} & \textbf{0.09} & \textbf{0.31} & \textbf{0.96} &  \textbf{0.45} & \textbf{0.67} & \textcolor{blue}{\textbf{0.45}} & \textcolor{blue}{\textbf{0.66}} \\
HardDense & 0.20 & 1.33 & 0.35 & 1.40 & \textbf{0.02} & \textbf{0.26} & 0.24 & 1.68 & \textbf{0.15} & \textbf{0.22} & \textbf{-0.04} & \textbf{0.08 }& \textbf{0.30} & \textbf{0.10} & \textbf{0.12} & \textbf{0.69} & \textbf{0.35} & \textbf{0.5 }& \textcolor{blue}{\textbf{0.42}} & \textcolor{blue}{\textbf{0.76}} \\
\midrule
\textbf{Average} & 0.42 & 2.06 & 0.54 & 2.05 & \textbf{0.02} & \textbf{0.39} & 0.58 & 2.77 & \textbf{0.18} & \textbf{0.58} & \textbf{-0.0}6 & \textbf{0.06} & \textbf{0.36} & \textbf{0.08} & \textbf{0.40} & \textbf{0.95} & \textbf{0.53} & \textbf{0.54} & \textcolor{blue}{\textbf{0.69}} & \textcolor{blue}{\textbf{0.71}} \\
\bottomrule
\end{tabular}
}
\end{table}

% These findings demonstrate that incorporating return-coverage rebalancing into constrained sequence modeling enables the agent to more effectively navigate reward-cost trade-offs under diverse safety requirements.

\subsection{Main Results for \algcb~on DSRL Benchmark}\label{sec:exp1}

The experimental results of \algcb~and all baselines on the DSRL benchmark are summarized in Table~\ref{tab:main-result}. Overall, \algcb~achieves consistently strong performance across SafetyGym, BulletSafetyGym, and MetaDrive, attaining the best return-cost performance on the majority of tasks.
Interpreting Table~\ref{tab:main-result} requires jointly considering safety and task performance. 
We first identify methods that satisfy the cost constant, indicated by \textbf{bold} entries where the averaged normalized cost is below~$1$. Among these safe methods, we primarily compare the averaged normalized return, with the highest safe return highlighted in \textcolor{blue}{blue}. Lower cost among already-safe methods is reported but treated as secondary, since overly conservative behavior can reduce cost without translating into meaningful task progress. 
This evaluation criterion aligns with the objective of constrained offline RL: satisfying safety requirements while preserving as much task performance as possible.

BC simply imitates the behavior in the offline dataset without any cost awareness, and can be viewed as an approximation of the behavior policy $\beta$. 
As shown in Table~\ref{tab:main-result}, such naive imitation does not provide meaningful safety guarantees, and often yields costs well above the constant. BC-Safe, in contrast, explicitly restricts training to trajectories that satisfy the cost constraint, and consequently achieves normalized costs below~1 on most tasks. 
However, this safety is obtained at the expense of substantially reduced returns, particularly on more challenging velocity-control and circle tasks where high-return behavior is relatively scarce in the dataset. 
Notably, \algcb~consistently attains higher returns than BC-Safe across all tasks, while maintaining competitive costs in most cases, leading to a markedly better return--cost trade-off. 
This comparison highlights an inherent limitation of purely supervised imitation-style methods under imbalanced data: imitating only a small “safe” subset may enforce constraint satisfaction, but the resulting loss of coverage over high-return regions severely limits achievable performance and generalization.

Value-based offline RL baselines with constraint handling (BCQ-Lag, BEAR-Lag, COptiDICE, and CPQ) often struggle to deliver a reliable return--cost trade-off across domains. 
On SafetyGym and BulletSafetyGym, these methods frequently incur normalized costs well above the safety constraint on a nontrivial portion of tasks, even when their returns are relatively competitive. 
In contrast, on MetaDrive, several of them (BEAR-Lag and CPQ) exhibit overly conservative behavior with markedly reduced returns despite achieving low costs. 
Taken together, these results suggest that simply incorporating scalar penalties, Lagrangian relaxations, or distribution-matching objectives into otherwise unconstrained offline RL pipelines does not consistently provide: the resulting trade-off is highly sensitive to the environment and the underlying data distribution, and does not robustly balance safety satisfaction with task performance.
By comparison, \algcb~achieves substantially more consistent return--cost trade-offs across SafetyGym, BulletSafetyGym, and MetaDrive, attaining strong returns while keeping costs close to or below the safety constraint on the vast majority of tasks.

FISOR and TraC are strong offline safe baselines that explicitly target safety-critical offline learning and often achieve low costs in practice. 
This pattern is evident from Table~\ref{tab:main-result}. 
On BulletSafetyGym and MetaDrive, FISOR attains extremely low average costs (0.03 and 0.08) but yields notably lower average returns (0.39 and 0.36) than \algcb~(0.63 and 0.69). 
On SafetyGym, FISOR is less reliable, with an average cost of 3.81 despite a moderate return of 0.42. 
TraC maintains relatively low costs across domains (average costs of 0.93/0.56/0.54 on SafetyGym/BulletSafetyGym/MetaDrive), yet its average returns (0.46/0.61/0.53) consistently trail those of \algcb~(0.59/0.63/0.69). 
Overall, while these methods provide competitive safety-oriented baselines, \algcb~achieves a more favorable return--cost trade-off across all three domains, delivering higher returns while keeping costs close to or below the safety constraint on the majority of tasks.

The comparison with CDT is particularly informative. Both CDT and \algcb~are CTG-conditioned CSM methods, so their policies can be evaluated across multiple cost thresholds in a zero-shot manner. 
As shown in Table~\ref{tab:main-result}, CDT tends to yield costs that concentrate around the safety constraint, and it even slightly exceeds the threshold on SafetyGym on average (cost $=1.09$). 
In contrast, \algcb~achieves lower costs than CDT across all three domains (1.09$\rightarrow$0.95 on SafetyGym, 0.77$\rightarrow$0.68 on BulletSafetyGym, and 0.95$\rightarrow$0.71 on MetaDrive), while also improving average returns (0.55$\rightarrow$0.59, 0.61$\rightarrow$0.63, and 0.40$\rightarrow$0.69, respectively).
This gap is consistent with the fact that CDT can be viewed as a degenerate variant of \algcb~that relies on naive CTG-conditioned maximum-likelihood training alone. 
By augmenting CDT with (i) a Lagrangian-style cost penalty with an auto-adaptive coefficient, (ii) reward--cost-aware trajectory-level reweighting, and (iii) Q-value regularization, \algcb~more effectively suppresses behaviors that lead to constraint violations while still encouraging high-return actions. 
As a result, \algcb~achieves a more favorable return--cost trade-off than CDT, demonstrating that principled regularization and reweighting are essential for realizing the full potential of CSM backbones in CMDPs.

Overall, the experimental results show that \algcb~achieves consistently strong return--cost trade-offs on the DSRL benchmark. Compared with representative baselines, \algcb~more reliably attains high normalized returns while satisfying the cost constraint on the vast majority of tasks, indicating robust behavior across diverse environments and data regimes. Moreover, \algcb~supports one-policy evaluation across multiple cost thresholds: a single trained policy can be deployed in a zero-shot manner by adjusting the target CTG, accommodating different safety requirements at inference time. Taken together, these comparisons demonstrate the effectiveness of \algcb~across multiple methodological baselines and suggest that it is a competitive approach with strong potential to advance the state-of-the-art in offline safe RL.

\subsection{Ablation Studies}\label{sec:exp3}
\newcommand{\ablwq}{WQDT}
\newcommand{\ablwc}{WCDT}
\newcommand{\ablqc}{QCDT}

\subsubsection{Component-level Ablation}

To empirically investigate the contribution of each individual component in the proposed \algcb~under safety-constrained settings, we compare \algcb~with its ablation variants on representative SafetyGym tasks. The ablations target four key aspects: \emph{CSM-based architecture}, \emph{trajectory-level weighting}, \emph{Q-value guidance}, and \emph{Lagrangian-style cumulative cost penalty}, whose configurations are summarized in Table~\ref{tab:ablation_components}. In this table, “CSM-based architecture" indicates whether the policy is implemented as a Decision-Transformer-style conditional sequence model, “Trajectory-level weighting" denotes the use of our data-weighting mechanism, “Q-value guidance" corresponds to value-based guidance, and “Cumulative-cost penalty" reflects the presence of the cost-penalization term in the objective. BC and BC-Safe serve as imitation-learning baselines without a CSM backbone; BC-Safe differs from BC only in that it applies trajectory-level weighting to favor safer trajectories. CDT is the vanilla CSM-based baseline without any of the newly introduced components. Building on CDT, \ablwq~enables both trajectory-level weighting and Q-value guidance, \ablwc~combines trajectory-level weighting with the cumulative-cost penalty, and \ablqc~uses Q-value guidance together with the cumulative-cost penalty but no weighting. Finally, \algcb~activates all four components and thus represents the full version of our proposed method.

Table~\ref{tab:ablation_dsrl} summarizes the component-level ablation results on three representative SafetyGym velocity-constrained tasks.
We first observe that plain behavior cloning (BC) consistently yields unsafe policies, exhibiting large normalized costs across all tasks (e.g., costs well above $1$), which confirms that these datasets are challenging and that naive imitation of the behavior data does not suffice for constraint satisfaction.
Introducing safety-aware trajectory reweighting in BC-Safe substantially reduces cost compared with BC, but this improvement often comes with a notable return drop (e.g., on \texttt{HopperVel}), reflecting the typical conservatism induced by prioritizing low-cost behavior without an explicit mechanism to recover high-return actions.

A key takeaway is that \textbf{a CSM backbone provides a stronger and more controllable foundation for offline safe RL}.
Although CDT is also trained via maximum-likelihood action modeling, it consistently achieves a \emph{better} return--cost trade-off than BC-Safe, attaining both higher return and lower cost across all tasks.
This suggests that the target-conditioned sequence-modeling form can better accommodate constraint variation at evaluation time, and that leveraging trajectory context together with RTG/CTG conditioning yields more effective trade-off control than non-CSM imitation baselines.
Notably, all \algcb~variants built on top of the CSM architecture inherit this advantage, further highlighting the importance of CSM as the policy backbone in safety-constrained offline learning.

The ablations also confirm that \textbf{each proposed component contributes meaningfully and the full combination is necessary for the best trade-off}.
Compared with CDT, all partial variants (\ablwq, \ablwc, \ablqc) that activate any two of the three mechanisms (weighting, Q-guidance, and cost penalty) improve (or keep) return while remaining within a comparable safety range, indicating that each ingredient provides non-trivial gains over naive CTG-conditioned training.
However, these gains are consistently smaller than those of the full method: \algcb~achieves the best overall return--cost trade-off on all tasks, and removing any single component leads to performance degradation, demonstrating that the components are complementary rather than redundant.

A more fine-grained comparison further clarifies the role of each ingredient.
First, \ablwq~ consistently increases return over CDT, but it also increases cost across all tasks, which matches our design intuition: trajectory-level weighting and Q-value guidance primarily mitigate overly conservative behavior and steer learning toward higher-return regions, typically at the expense of additional cost.
Crucially, \algcb~introduces the Lagrangian-style cumulative-cost penalty on top of these return-seeking signals, and achieves lower cost than \ablwq~without sacrificing its return gains, indicating that the penalty term provides an effective counterbalance that stabilizes constraint satisfaction while remaining compatible with the other mechanisms.
Second, \ablwc~ and \ablqc~ both combine one return-improving signal with the cost penalty; the resulting policies generally outperform CDT in return but still exhibit higher costs than the full method, suggesting that using only a single return-seeking mechanism is insufficient to fully resolve the conservatism--violation tension, and that the best trade-off emerges only when both weighting and Q-guidance are jointly leveraged under the cost-penalized objective.

Overall, the ablation study provides clear evidence for the intended functional roles of the components:
the Lagrangian-style penalty is essential for reducing cost and stabilizing constraint satisfaction, whereas trajectory-level weighting and Q-value guidance are key to improving return by avoiding overly conservative solutions.
Importantly, \algcb~achieves the most favorable return--cost trade-off only when all components are enabled, validating the necessity and complementarity of the proposed design choices.

\begin{table}[t]
\centering
\small
\caption{Component-level breakdown across ablation variants, where a $\checkmark$ indicates that the corresponding component is enabled and a $\times$ that it is disabled.}
\label{tab:ablation_components}
\begin{tabular}{l|ccccccc}
\toprule
\textbf{Component} &
\textbf{BC} &
\textbf{BC-Safe} &
\textbf{CDT} &
\textbf{\ablwq} &
\textbf{\ablwc} &
\textbf{QCDT} &
\textbf{\algcb} \\
\midrule
CSM-based architecture      & $\times$ & $\times$ & $\checkmark$ & $\checkmark$ & $\checkmark$ & $\checkmark$ & $\checkmark$ \\
Trajectory-level weighting  & $\times$ & $\checkmark$ & $\times$ & $\checkmark$ & $\checkmark$ & $\times$ & $\checkmark$ \\
Q-value guidance            & $\times$ & $\times$ & $\times$ & $\checkmark$ & $\times$ & $\checkmark$ & $\checkmark$ \\
Cumulative-cost penalty     & $\times$ & $\times$ & $\times$ & $\times$ & $\checkmark$ & $\checkmark$ & $\checkmark$ \\
\bottomrule
\end{tabular}
\end{table}

\begin{table}[t]
\centering
\small
\caption{Performance comparison of ablation variants across DSRL tasks. For each method, we report normalized return (ret, higher is better) and normalized cost (cost, lower is better).}
\label{tab:ablation_dsrl}
\resizebox{\textwidth}{!}{
\begin{tabular}{l|cc|cc|cc|cc|cc|cc|cc}
\toprule
\multirow{2}{*}{\textbf{Task}} &
\multicolumn{2}{c}{\textbf{BC}} &
\multicolumn{2}{c}{\textbf{BC-Safe}} &
\multicolumn{2}{c}{\textbf{CDT}} &
\multicolumn{2}{c}{\textbf{WQDT}} &
\multicolumn{2}{c}{\textbf{WCDT}} &
\multicolumn{2}{c}{\textbf{QCDT}} &
\multicolumn{2}{c}{\textbf{\algcb}} \\
\cmidrule(lr){2-3}
\cmidrule(lr){4-5}
\cmidrule(lr){6-7}
\cmidrule(lr){8-9}
\cmidrule(lr){10-11}
\cmidrule(lr){12-13}
\cmidrule(lr){14-15}
& 
ret $\uparrow$ & cost $\downarrow$
& ret $\uparrow$ & cost $\downarrow$
& ret $\uparrow$ & cost $\downarrow$
& ret $\uparrow$ & cost $\downarrow$
& ret $\uparrow$ & cost $\downarrow$
& ret $\uparrow$ & cost $\downarrow$
& ret $\uparrow$ & cost $\downarrow$ \\
\midrule
SwimmerVel      & 0.49 & 4.72 & 0.51 & 1.07                   & \textbf{0.63} & \textbf{0.30} & \textcolor{blue}{\textbf{0.68}} & \textcolor{blue}{\textbf{0.82}} & \textbf{0.66} & \textbf{0.49} & \textbf{0.67} & \textbf{0.76} & \textcolor{blue}{\textbf{0.68}} & \textcolor{blue}{\textbf{0.46}} \\
HopperVel       & 0.65 & 6.39 & \textbf{0.36} & \textbf{0.67} & \textbf{0.61} & \textbf{0.22} & \textcolor{blue}{\textbf{0.80}} & \textcolor{blue}{\textbf{0.54}} & \textbf{0.77} & \textbf{0.52} & \textbf{0.79} & \textbf{0.78} & \textcolor{blue}{\textbf{0.80}} & \textcolor{blue}{\textbf{0.45}} \\
Walker2dVel     & 0.79 & 3.88 & \textbf{0.79} & \textbf{0.04} & \textbf{0.79} & \textbf{0.03} & \textcolor{blue}{\textbf{0.80}} & \textcolor{blue}{\textbf{0.08}} & \textbf{0.78} & \textbf{0.01} & \textbf{0.79} & \textbf{0.04} & \textcolor{blue}{\textbf{0.80}} & \textcolor{blue}{\textbf{0.01}} \\
\bottomrule
\end{tabular}
}
\end{table}

\subsubsection{Robustness Evaluation on DSRL Dataset Variants}\label{sec:exp2}

To further examine the robustness of \algcb~under more demanding data regimes by evaluating it on a set of dataset variants derived from the original DSRL benchmark. All variants are constructed by retaining a subset of trajectories from each original dataset to induce (\romannumeral1) a lower proportion of expert-level trajectories and (\romannumeral2) more imbalanced return distributions. The corresponding results are reported in Table~\ref{tab:worse_data_comparison}, and Table~\ref{tab:imbalance_data_comparison}, respectively. For each environment, we order the retention ratios from larger to smaller values, so that the reported settings become progressively more challenging as the available data decreases (or becomes more skewed).

\begin{table}[t]
\centering
\small
\caption{Performance comparison on datasets constructed by retaining only the bottom-$\rho\%$ trajectories (ranked by return) from each original dataset (i.e., discarding the top-$(100-\rho)\%$ expert trajectories).}
\label{tab:worse_data_comparison}
\resizebox{0.5\textwidth}{!}{
\begin{tabular}{ll|cc|cc|cc}
\toprule
\multirow{2}{*}{\textbf{Task}} & \multirow{2}{*}{$\rho\%$} 
& \multicolumn{2}{c|}{\textbf{BC}}
& \multicolumn{2}{c|}{\textbf{CDT}}
& \multicolumn{2}{c}{\textbf{\algcb}} \\
\cmidrule(lr){3-4}\cmidrule(lr){5-6}\cmidrule(lr){7-8}
& 
& rew $\uparrow$ & cost $\downarrow$
& rew $\uparrow$ & cost $\downarrow$
& rew $\uparrow$ & cost $\downarrow$ \\
\midrule
\multirow{4}{*}{SwimmweVel} & $100\%$ & 0.49 & 4.72 & \textbf{0.63} & \textbf{0.30} & \textcolor{blue}{\textbf{0.68}} & \textcolor{blue}{\textbf{0.46}} \\
                      & $80\%$ & 0.49 & 4.66 & \textbf{0.54} & \textbf{0.63} & \textcolor{blue}{\textbf{0.58}} & \textcolor{blue}{\textbf{0.35}} \\
                      & $70\%$ & 0.38 & 3.30 & \textbf{0.47} & \textbf{0.69} & \textcolor{blue}{\textbf{0.52}} & \textcolor{blue}{\textbf{0.42}} \\
                      & $60\%$ & \textbf{0.42} & \textbf{0.85} & \textcolor{blue}{\textbf{0.63}} & \textcolor{blue}{\textbf{0.76}} & \textbf{0.43} & \textbf{0.41} \\
\midrule
\multirow{4}{*}{HopperVel} & $100\%$ & 0.65 & 6.39 & \textbf{0.61} & \textbf{0.22} & \textcolor{blue}{\textbf{0.80}} & \textcolor{blue}{\textbf{0.45}} \\
                      & $80\%$ & 0.66 & 4.38 & \textbf{0.61} & \textbf{0.35} & \textcolor{blue}{\textbf{0.63}} & \textcolor{blue}{\textbf{0.43}} \\
                      & $70\%$ & 0.59 & 5.13 & \textbf{0.27} & \textbf{0.64} & \textcolor{blue}{\textbf{0.52}} & \textcolor{blue}{\textbf{0.42}} \\
                      & $60\%$ & 0.56 & 5.39 & \textbf{0.27} & \textbf{0.73} & \textcolor{blue}{\textbf{0.46}} & \textcolor{blue}{\textbf{0.74}} \\
\midrule
\multirow{4}{*}{HalfCheetahVel} & $100\%$ & 0.97 & 13.1 & \textbf{0.97} & \textbf{0.04} & \textcolor{blue}{\textbf{0.98}} & \textcolor{blue}{\textbf{0.08}} \\
                      & $80\%$ & 0.89 & 3.26 & \textbf{0.93} & \textbf{0.48} & \textcolor{blue}{\textbf{0.93}} & \textcolor{blue}{\textbf{0.33}} \\
                      & $70\%$ & 0.89 & 4.64 & \textbf{0.91} & \textbf{0.65} & \textcolor{blue}{\textbf{0.89}} & \textcolor{blue}{\textbf{0.22}} \\
                      & $60\%$ & 0.89 & 4.81 & 0.86 & 1.64 & \textcolor{blue}{\textbf{0.86}} & \textcolor{blue}{\textbf{0.75}} \\
\midrule
\multirow{4}{*}{Walker2dVel} & $100\%$ & 0.79 & 3.88 & \textbf{0.79} & \textbf{0.03} & \textcolor{blue}{\textbf{0.80}} & \textcolor{blue}{\textbf{0.01}} \\
                      & $80\%$ & 0.79 & 3.54  & \textbf{0.74} & \textbf{0.48} & \textcolor{blue}{\textbf{0.79}} & \textcolor{blue}{\textbf{0.01}} \\
                      & $70\%$ & 0.79 & 3.20  & \textbf{0.72} & \textbf{0.86} & \textcolor{blue}{\textbf{0.79}} & \textcolor{blue}{\textbf{0.01}} \\
                      & $60\%$ & 0.77 & 2.00  & \textbf{0.78} & \textbf{0.00} & \textcolor{blue}{\textbf{0.79}} & \textcolor{blue}{\textbf{0.01}} \\
\midrule
\multirow{4}{*}{AntVel} & $100\%$ & 0.98 & 3.72 & \textbf{0.95} & \textbf{0.18} & \textcolor{blue}{\textbf{0.98}} & \textcolor{blue}{\textbf{0.19}} \\
                      & $80\%$ & 0.96 & 5.55 & \textbf{0.94} & \textbf{0.48} & \textcolor{blue}{\textbf{0.96}} & \textcolor{blue}{\textbf{0.48}} \\
                      & $70\%$ & 0.98 & 10.16& \textbf{0.94} & \textbf{0.43} & \textcolor{blue}{\textbf{0.95}} & \textcolor{blue}{\textbf{0.36}} \\
                      & $60\%$ & 0.89 & 2.91 & \textbf{0.92} & \textbf{0.33} & \textcolor{blue}{\textbf{0.94}} & \textcolor{blue}{\textbf{0.42}} \\
\bottomrule
\end{tabular}
}
\end{table}

\begin{table}[t]
\centering
\small
\caption{Performance comparison on more imbalanced datasets constructed by retaining only the top-$\rho\%$ and bottom-$\rho\%$ trajectories (ranked by return) from each original dataset.}
\label{tab:imbalance_data_comparison}
\resizebox{0.5\textwidth}{!}{
\begin{tabular}{ll|cc|cc|cc}
\toprule
\multirow{2}{*}{\textbf{Task}} & \multirow{2}{*}{$\rho\%$} 
& \multicolumn{2}{c|}{\textbf{BC}}
& \multicolumn{2}{c|}{\textbf{CDT}}
& \multicolumn{2}{c}{\textbf{\algcb}} \\
\cmidrule(lr){3-4}\cmidrule(lr){5-6}\cmidrule(lr){7-8}
& 
& rew $\uparrow$ & cost $\downarrow$
& rew $\uparrow$ & cost $\downarrow$
& rew $\uparrow$ & cost $\downarrow$ \\
\midrule
\multirow{4}{*}{SwimmweVel} & $100\%$ & 0.49 & 4.72 & \textbf{0.63} & \textbf{0.30} & \textcolor{blue}{\textbf{0.68}} & \textcolor{blue}{\textbf{0.46}} \\
                      & $30\%$ & 0.64 & 4.36 & \textbf{0.67} & \textbf{0.63} & \textcolor{blue}{\textbf{0.68}} & \textcolor{blue}{\textbf{0.55}} \\
                      & $20\%$ & 0.20 & 2.74 & \textbf{0.64} & \textbf{0.73} & \textcolor{blue}{\textbf{0.68}} & \textcolor{blue}{\textbf{0.77}} \\
                      & $10\%$ & 0.11 & 2.67 & \textbf{0.64} & \textbf{0.84} & \textcolor{blue}{\textbf{0.68}} & \textcolor{blue}{\textbf{0.94}} \\
\midrule
\multirow{4}{*}{HopperVel} & $100\%$ & 0.65 & 6.39 & \textbf{0.61} & \textbf{0.22} & \textcolor{blue}{\textbf{0.80}} & \textcolor{blue}{\textbf{0.45}} \\
                      & $30\%$ & 0.33 & 5.63 & \textbf{0.60} & \textbf{0.43} & \textcolor{blue}{\textbf{0.70}} & \textcolor{blue}{\textbf{0.62}} \\
                      & $20\%$ & 0.17 & 3.65 & \textbf{0.24} & \textbf{1.50} & \textcolor{blue}{\textbf{0.66}} & \textcolor{blue}{\textbf{0.86}} \\
                      & $10\%$ & 0.53 & 13.82& \textbf{0.37} & \textbf{0.27} & \textcolor{blue}{\textbf{0.63}} & \textcolor{blue}{\textbf{0.66}} \\
\midrule
\multirow{4}{*}{HalfCheetahVel} & $100\%$ & 0.97 & 13.1 & \textbf{0.97} & \textbf{0.04} & \textcolor{blue}{\textbf{0.98}} & \textcolor{blue}{\textbf{0.08}} \\
                      & $30\%$ & 0.83 & 2.78 & \textbf{0.97} & \textbf{0.10} & \textcolor{blue}{\textbf{0.98}} & \textcolor{blue}{\textbf{0.13}} \\
                      & $20\%$ & 0.94 & 4.84 & \textbf{0.96} & \textbf{0.08} & \textcolor{blue}{\textbf{0.98}} & \textcolor{blue}{\textbf{0.16}} \\
                      & $10\%$ & 0.73 & 13.25& \textbf{0.94} & \textbf{0.01} & \textcolor{blue}{\textbf{0.99}} & \textcolor{blue}{\textbf{0.22}} \\
\midrule
\multirow{4}{*}{Walker2dVel} & $100\%$ & 0.79 & 3.88 & \textbf{0.79} & \textbf{0.03} & \textcolor{blue}{\textbf{0.80}} & \textcolor{blue}{\textbf{0.01}} \\
                      & $30\%$ & 0.83 & 3.00 & \textbf{0.80} & \textbf{0.02} & \textcolor{blue}{\textbf{0.80}} & \textcolor{blue}{\textbf{0.01}} \\
                      & $20\%$ & 0.81 & 2.73 & \textbf{0.76} & \textbf{0.03} & \textcolor{blue}{\textbf{0.79}} & \textcolor{blue}{\textbf{0.01}} \\
                      & $10\%$ & 0.80 & 2.41 & \textcolor{blue}{\textbf{0.73}} & \textcolor{blue}{\textbf{0.02}} & \textcolor{blue}{\textbf{0.73}} & \textcolor{blue}{\textbf{0.56}} \\
\midrule
\multirow{4}{*}{AntVel} & $100\%$ & 0.98 & 3.72 & \textbf{0.95} & \textbf{0.18} & \textcolor{blue}{\textbf{0.98}} & \textcolor{blue}{\textbf{0.19}} \\
                      & $30\%$ & 0.98 & 5.11 & \textbf{0.98} & \textbf{0.41} & \textcolor{blue}{\textbf{0.99}} & \textcolor{blue}{\textbf{0.71}} \\
                      & $20\%$ & 0.97 & 1.01 & \textbf{0.98} & \textbf{0.39} & \textcolor{blue}{\textbf{0.99}} & \textcolor{blue}{\textbf{0.86}} \\
                      & $10\%$ & 0.95 & 1.88 & \textbf{0.98} & \textbf{0.40} & \textcolor{blue}{\textbf{0.98}} & \textcolor{blue}{\textbf{0.75}} \\
\bottomrule
\end{tabular}
}
\end{table}

\textbf{Performance under lower expert-data coverage.}
% Table~\ref{tab:worse_data_comparison} reports results on degraded datasets constructed by discarding the top-$(100-\rho)\%$ high-return trajectories, which directly reduces expert-level coverage and makes it substantially harder to recover favorable return--cost trade-offs from offline data. 
Table~\ref{tab:worse_data_comparison} reports results on degraded datasets constructed by retaining only the bottom-$\rho\%$ low-return trajectories (ranked by return), which directly reduces expert-level coverage and makes it substantially harder to recover favorable return--cost trade-offs from offline data.
Across the five velocity-constrained locomotion tasks (\texttt{SwimmweVel}, \texttt{HopperVel}, \texttt{HalfCheetahVel}, \texttt{Walker2dVel}, and \texttt{AntVel}), \algcb~remains consistently competitive as $\rho$ decreases from $100\%$ to $60\%$. 
In most settings, \algcb~achieves the highest reward while maintaining costs comparable to CDT, whereas BC often incurs noticeably larger cumulative costs. 
For example, on \texttt{HopperVel} and \texttt{AntVel}, \algcb~preserves strong reward across all retention ratios and avoids the severe cost explosion observed for BC, indicating that the Lagrangian-style penalty provides an effective bias toward constraint satisfaction even when safe expert trajectories are removed. 
Moreover, \algcb~often improves reward over CDT at similar cost levels (e.g., $\rho=100\%$ and $80\%$ on multiple tasks), suggesting that the proposed trajectory-level reweighting and Q-value regularization can still exploit informative sub-optimal data to enhance performance under reduced expert coverage. 

\textbf{Performance under imbalanced data distributions.}
Table~\ref{tab:imbalance_data_comparison} evaluates a complementary stress test where the dataset is made increasingly imbalanced by retaining only the top-$\rho\%$ and bottom-$\rho\%$ trajectories. 
This construction amplifies distribution skew and can exacerbate the tendency of likelihood-trained policies to overfit dominant modes, making robust trade-off control more challenging as $\rho$ decreases. 
Overall, \algcb~exhibits strong robustness across tasks when $\rho$ is reduced from $30\%$ to $10\%$: it consistently achieves the highest reward, and it remains competitive in cumulative cost relative to CDT, while BC typically suffers substantial reward degradation and often very large costs. 
Notably, on \texttt{HalfCheetahVel} and \texttt{SwimmweVel}, \algcb~maintains near-saturated normalized returns even at $\rho=10\%$, demonstrating that its training recipe can still leverage the retained high-quality trajectories without being derailed by the low-return tail. 
At the same time, costs can increase under the most extreme imbalance on some tasks (e.g., \texttt{SwimmweVel} and \texttt{HopperVel} at small $\rho$), reflecting the intrinsic difficulty of satisfying constraints when the dataset provides limited coverage of safe behaviors under heavily skewed mixtures; nevertheless, \algcb~typically delivers a better overall return--cost trade-off than both CDT and BC. 
These results suggest that \algcb~is less sensitive to return-distribution skew and can better tolerate highly imbalanced offline datasets, which aligns with our goal of robust zero-shot safe control under limited and biased coverage.

%%%%%%%%%%%%%%%%%

\section{Conclusion}\label{sec:conclusion}

This work studies offline safe reinforcement learning through the lens of conditional sequence modeling (CSM) and its zero-shot controllability.
We analyze the discrepancy between the specified RTG/CTG signals and the resulting expected return and cumulative cost induced by a conditioned CSM policy, and show that this mismatch admits an upper bound scaling as $O\left(\varepsilon\left(\tfrac{1}{\alpha_F}+2\right)H^2\right)$, where $\alpha_F$ characterizes the joint return--cost coverage of the target profile in the offline dataset.
This characterization clarifies when RTG/CTG conditioning can serve as a reliable mechanism for safe decision making and highlights the fundamental role of data coverage in achieving stable return--cost trade-offs.
Guided by these insights, we propose \algcb, a CSM-based offline safe RL algorithm that integrates a Lagrangian-style cumulative-cost penalty with an auto-adaptive dual-ascent update of the penalty coefficient, together with a reward--cost-aware trajectory-level reweighting mechanism and Q-value regularization.
These components decouple policy learning from any single pre-specified cost threshold while mitigating overly conservative behavior, enabling a single trained policy to support zero-shot deployment across varying cost constraints.
Extensive experiments on the DSRL benchmark under multiple evaluation thresholds demonstrate that \algcb~consistently improves return--cost trade-offs over representative baselines, and suggest that coverage-aware training objectives are a promising direction for advancing CSM-based safe reinforcement learning.

\newpage
\appendix

\section{Proofs of Theoretical Results}\label{sec:proof}

\subsection{Proof of Theorem~\ref{thm:cmdp-alignment}} \label{sec:proof1}
%%%%%%%
\begin{proof}
We first prove the bound
\begin{equation}
\mathbb{E}_{s_1 \sim \mu}\bigl[F_R(s_1)\bigr] - J_R\bigl(\pi^{\mathrm{CDT}}_F\bigr)
\;\le\;
C \,\varepsilon \Bigl(\frac{1}{\alpha_F} + 2\Bigr) H^2,
\label{eq:reward-gap-cmdp-claim}
\end{equation}
for some universal constant $C>0$. Throughout the proof we assume, without loss of generality, that the instantaneous reward is uniformly bounded as $|r(s,a)| \le 1$ for all $(s,a)$.

Under Assumption~\ref{assump:cmdp-near-det}, there exist deterministic functions
\[
T : \mathcal{S} \times \mathcal{A} \to \mathcal{S},
\quad
r : \mathcal{S} \times \mathcal{A} \to \mathbb{R},
\quad
c : \mathcal{S} \times \mathcal{A} \to [0,c_{\max}],
\]
such that the true transition triplet $(r_t,c_t,s_{t+1})$ deviates from $(r(s_t,a_t),c(s_t,a_t),T(s_t,a_t))$ with probability at most $\varepsilon$ at each time step.
Fix an initial state $s_1 \in \mathcal{S}$ and an open-loop action sequence $a_{1:H}=(a_1,\dots,a_H)$.
We define the \emph{deterministic} state, reward, and cost sequences by
$
\bar{s}_1 := s_1,
$
$
\bar{s}_{t+1} := T(\bar{s}_t,a_t),
$
$
r_t^{\det} := r(\bar{s}_t,a_t),
$
$
c_t^{\det} := c(\bar{s}_t,a_t),
$
$
t=1,\dots,H.
$
The corresponding deterministic cumulative reward is
$
G^{\det}(s_1,a_{1:H}) := \sum_{t=1}^H r_t^{\det}.
$

In the true (stochastic) CMDP, when we follow a policy $\pi$ starting from $s_1$, the trajectory $\tau$ and its return
$
R(\tau) = \sum_{t=1}^H r_t
$
are random. For the CDT policy $\pi^{\mathrm{CDT}}_F$, we write
$
J_R\bigl(\pi^{\mathrm{CDT}}_F\bigr)
= \mathbb{E}_{s_1 \sim \mu}
  \Bigl[
    \mathbb{E}_{\tau \sim \pi^{\mathrm{CDT}}_F}\bigl[ R(\tau) \mid s_1 \bigr]
  \Bigr].
$

In the infinite-data and realizable limit, for any $s \in \mathcal{S}$ and $a \in \mathcal{A}$, the CDT policy satisfies
\begin{equation}
\pi^{\mathrm{CDT}}_F(a \mid s)
= P_{\pi_{\beta}}\bigl(a \mid s, F(s)\bigr)
= \pi_{\beta}(a \mid s)\,
  \frac{P_{\pi_{\beta}}\bigl(F(s) \mid s,a\bigr)}
       {P_{\pi_{\beta}}\bigl(F(s) \mid s\bigr)}.
\label{eq:cdt-reweighting}
\end{equation}

Then the reward gap can be written as
\begin{align}
&\mathbb{E}_{s_1 \sim \mu}\bigl[F_R(s_1)\bigr] - J_R\bigl(\pi^{\mathrm{CDT}}_F\bigr) \\
&=
\mathbb{E}_{s_1 \sim \mu}\Bigl[
  F_R(s_1) - \mathbb{E}_{\tau \sim \pi^{\mathrm{CDT}}_F}\bigl[R(\tau)\mid s_1\bigr]
\Bigr] \nonumber\\
&=
\mathbb{E}_{s_1 \sim \mu}\Bigl[
  \mathbb{E}_{a_{1:H} \sim \pi^{\mathrm{CDT}}_F|s_1}
  \bigl[
    F_R(s_1) - \mathbb{E}\bigl[R(\tau)\mid s_1,a_{1:H}\bigr]
  \bigr]
\Bigr] \\
&=
\mathbb{E}_{s_1 \sim \mu}\Bigl[
  \mathbb{E}_{a_{1:H} \sim \pi^{\mathrm{CDT}}_F|s_1}
  \bigl[ F_R(s_1) - G^{\det}(s_1,a_{1:H}) \bigr]
\Bigr]
\nonumber\\
&\quad +
\mathbb{E}_{s_1 \sim \mu}\Bigl[
  \mathbb{E}_{a_{1:H} \sim \pi^{\mathrm{CDT}}_F|s_1}
  \bigl[ G^{\det}(s_1,a_{1:H}) - \mathbb{E}\bigl[R(\tau)\mid s_1,a_{1:H}\bigr] \bigr]
\Bigr]
\nonumber\\
&=: \mathrm{(A)} + \mathrm{(B)}.
\label{eq:gap-decomp}
\end{align}

We will bound the terms $\mathrm{(A)}$ and $\mathrm{(B)}$ separately.
We first show that term $\mathrm{(B)}$ is bounded by $O(\varepsilon H^2)$. Fix $(s_1,a_{1:H})$ and consider the stochastic trajectory $(s_t,r_t)_{t=1}^H$ generated under the true dynamics when starting from $s_1$ and applying the action sequence $a_{1:H}$. At each time step $t$, define the event
$
E_t := \bigl\{
  r_t \neq r(\bar{s}_t,a_t)
  \ \text{or}\
  s_{t+1} \neq T(\bar{s}_t,a_t)
\bigr\}.
$
By Assumption~\ref{assump:cmdp-near-det}, for any $(s_t,a_t)$ we have
$
P(E_t | s_t = \bar{s}_t, a_t) \;\le\; \varepsilon.
$
Let $E := \bigcup_{t=1}^H E_t$ be the event that at least one deviation occurs along the trajectory. By a union bound,
$
P(E \mid s_1, a_{1:H}) \;\le\; \varepsilon H.
$
On the complement $E^c$, the stochastic trajectory coincides with the deterministic one, so $r_t = r^{\det}_t$ for all $t$ and hence
$
G_1 = \sum_{t=1}^H r_t = \sum_{t=1}^H r_t^{\det} = G^{\det}(s_1,a_{1:H}).
$
Therefore,
$
G^{\det}(s_1,a_{1:H}) - G_1 = 0 \text{ on } E^c.
$
On $E$, we have $|G^{\det}(s_1,a_{1:H}) - G_1| \le H$ because $|r_t| \le 1$ and both $G^{\det}$ and $G_1$ are sums of $H$ terms.
Combining these two observations, we obtain
\[
\bigl|\mathbb{E}[G^{\det}(s_1,a_{1:H}) - G_1 \mid s_1,a_{1:H}]\bigr|
\;\le\;
H \cdot P(E \mid s_1,a_{1:H})
\;\le\;
\varepsilon H^2.
\]
Taking expectations over $a_{1:H} \sim \pi^{\mathrm{CDT}}_F|s_1$ and $s_1 \sim \mu$, we conclude that
\begin{equation}
\mathrm{(B)}
=
\mathbb{E}_{s_1}\Bigl[
  \mathbb{E}_{a_{1:H} \sim \pi^{\mathrm{CDT}}_F|s_1}
  \bigl[ G^{\det}(s_1,a_{1:H}) - G_1 \bigr]
\Bigr]
\;\le\;
\varepsilon H^2.
\label{eq:bound-B}
\end{equation}

To bound term (A), where
$
\mathrm{(A)}=
\mathbb{E}_{s_1 \sim \mu}\Bigl[
  \mathbb{E}_{a_{1:H} \sim \pi^{\mathrm{CDT}}_F(\cdot \mid s_1)}
  \bigl[ F_R(s_1) - G^{\det}(s_1,a_{1:H}) \bigr]
\Bigr].
$
We analyze the behavior of $\pi^{\mathrm{CDT}}_F$ at the level of action sequences.
For notational convenience, let $\bar{s}_t = T(s_1,a_{1:t-1})$ be the deterministic state reached at step $t$
under $(T,r,c)$ when starting from $s_1$ and applying $a_{1:t-1}$.
For a fixed initial state $s_1$, write
$
P_{\pi^{\mathrm{CDT}}_F}(a_{1:H} \mid s_1)
$
as the conditional path probability under $\pi^{\mathrm{CDT}}_F$.
By conditioning on the next state, we can expand
\begin{align}
P_{\pi^{\mathrm{CDT}}_F}(a_{1:H} \mid s_1)
&=
\pi^{\mathrm{CDT}}_F(a_1 \mid s_1)
\int_{\mathcal{S}} P(s_2 \mid s_1,a_1)\,
                     P_{\pi^{\mathrm{CDT}}_F}(a_{2:H} \mid s_1,s_2)\,
                     \mathrm{d}s_2.
\label{eq:path-expand-1}
\end{align}
By near determinism (Assumption~\ref{assump:cmdp-near-det}), with probability at least $1-\varepsilon$ we have
$s_2 = \bar{s}_2 := T(s_1,a_1)$, and otherwise the state can deviate. Thus
\begin{equation}
P_{\pi^{\mathrm{CDT}}_F}(a_{1:H} \mid s_1)
\;\le\;
\pi^{\mathrm{CDT}}_F(a_1 \mid s_1)\,
P_{\pi^{\mathrm{CDT}}_F}(a_{2:H} \mid s_1,\bar{s}_2)
\;+\;
\varepsilon.
\label{eq:path-step-1}
\end{equation}

Under the joint return-cost coverage condition
(Assumption~\ref{assump:cmdp-near-det}, item 1),
we have $P_{\pi_{\beta}}(F(s_1)\mid s_1) \ge \alpha_F$.
Moreover, by near determinism and the consistency of $F$
(Assumption~\ref{assump:cmdp-near-det}, items 2 and 3),
the event
$
\bigl\{ (R(\tau),C(\tau)) = F(s_1) \bigr\}
$
starting from $(s_1,a_1)$ is equivalent, up to an $O(\varepsilon)$ slack, to the event
$
\bigl\{ (R(\tau_{2:H}),C(\tau_{2:H})) = F(\bar{s}_2) \bigr\}
$
starting from $(\bar{s}_2,a_2)$, where $\tau_{2:H}$ denotes the tail subtrajectory.
This yields the bound
\begin{equation}
P_{\pi_{\beta}}\bigl(F(s_1) \mid s_1,a_1\bigr)
\;\le\;
\varepsilon
+
P_{\pi_{\beta}}\bigl(F(\bar{s}_2) \mid \bar{s}_2\bigr),
\label{eq:shift-F}
\end{equation}
%%%%%%
To justify inequality~\eqref{eq:shift-F}, fix $(s_1,a_1)$ and consider the
stochastic trajectory $\tau$ induced by $\beta$ and the true CMDP dynamics.
Let $\tau_{2:H}$ denote the tail subtrajectory from time step $2$ to $H$,
and let $\bar{s}_2 := T(s_1,a_1)$ be the deterministic successor under $(T,r,c)$.
Define the ``good'' event
\[
E_1 := \bigl\{
  r_1 = r(s_1,a_1),\;
  c_1 = c(s_1,a_1),\;
  s_2 = \bar{s}_2
\bigr\}.
\]
By near determinism (Assumption~\ref{assump:cmdp-near-det}), we have
$P_{\pi_{\beta}}(E_1^c \mid s_1,a_1) \le \varepsilon$.
We decompose
\begin{align*}
P_{\pi_{\beta}}\bigl(F(s_1) \mid s_1,a_1\bigr)
&=
P_{\pi_{\beta}}\bigl(F(s_1) \cap E_1 \mid s_1,a_1\bigr)
+
P_{\pi_{\beta}}\bigl(F(s_1) \cap E_1^c \mid s_1,a_1\bigr) \\
&\le
P_{\pi_{\beta}}\bigl(F(s_1) \cap E_1 \mid s_1,a_1\bigr)
+
P_{\pi_{\beta}}\bigl(E_1^c \mid s_1,a_1\bigr) \\
&\le
P_{\pi_{\beta}}\bigl(F(s_1) \cap E_1 \mid s_1,a_1\bigr)
+
\varepsilon.
\end{align*}
On $E_1$ we have
$r_1 = r(s_1,a_1)$, $c_1 = c(s_1,a_1)$, and $s_2 = \bar{s}_2$,
and the consistency of $F$ implies
\[
F_R(s_1) = r(s_1,a_1) + F_R(\bar{s}_2),
\qquad
F_C(s_1) = c(s_1,a_1) + F_C(\bar{s}_2).
\]
Since
$R(\tau) = r_1 + R(\tau_{2:H})$ and $C(\tau) = c_1 + C(\tau_{2:H})$,
the event
$\{(R(\tau),C(\tau)) = F(s_1)\}$ on $E_1$ is equivalent to
$\{(R(\tau_{2:H}),C(\tau_{2:H})) = F(\bar{s}_2)\}$ with $s_2 = \bar{s}_2$.
Therefore
\[
P_{\pi_{\beta}}\bigl(F(s_1) \cap E_1 \mid s_1,a_1\bigr)
=
P_{\pi_{\beta}}\bigl(F(\bar{s}_2) \mid s_2 = \bar{s}_2\bigr)
=
P_{\pi_{\beta}}\bigl(F(\bar{s}_2) \mid \bar{s}_2\bigr),
\]
and we can conclude the inequality in~\eqref{eq:shift-F}.

Hence
\begin{equation}
\begin{aligned}
\pi^{\mathrm{CDT}}_F(a_1 \mid s_1)
& =
\pi_{\beta}(a_1 \mid s_1)\,
\frac{P_{\pi_{\beta}}(F(s_1) \mid s_1,a_1)}
     {P_{\pi_{\beta}}(F(s_1) \mid s_1)} \\
& \le
\pi_{\beta}(a_1 \mid s_1)\,
\frac{P_{\pi_{\beta}}(F(\bar{s}_2) \mid \bar{s}_2)}
     {P_{\pi_{\beta}}(F(s_1) \mid s_1)}
+\frac{\varepsilon}{\alpha_F},
\label{eq:pi-step-1}
\end{aligned}
\end{equation}
where we used $P_{\pi_{\beta}}(F(s_1)\mid s_1) \ge \alpha_F$ to bound the contribution of the $\varepsilon$ term.
Substituting~\eqref{eq:pi-step-1} into~\eqref{eq:path-step-1}, we obtain
\begin{equation}
P_{\pi^{\mathrm{CDT}}_F}(a_{1:H} \mid s_1)
\;\le\;
\pi_{\beta}(a_1 \mid s_1)\,
\frac{P_{\pi_{\beta}}(F(\bar{s}_2) \mid \bar{s}_2)}
     {P_{\pi_{\beta}}(F(s_1) \mid s_1)}\,
P_{\pi^{\mathrm{CDT}}_F}(a_{2:H} \mid s_1,\bar{s}_2)
\;+\;
\varepsilon\Bigl(\frac{1}{\alpha_F} + 1\Bigr).
\label{eq:path-step-2}
\end{equation}

Repeating the same argument recursively for $t=2,\dots,H$—each time replacing the random next state by the deterministic
$\bar{s}_{t+1} = T(\bar{s}_t,a_t)$ up to an additive $\varepsilon$ slack, and shifting the conditioning on the joint return-cost via the consistency of $F$—yields, after $H$ steps,
\begin{equation}
P_{\pi^{\mathrm{CDT}}_F}(a_{1:H} \mid s_1)
\;\le\;
\Biggl[
  \prod_{t=1}^{H} \pi_{\beta}(a_t \mid \bar{s}_t)
\Biggr]
\frac{\mathbb{I}\bigl[(R(\tau^{\det}),C(\tau^{\det})) = F(s_1)\bigr]}
     {P_{\pi_{\beta}}(F(s_1) \mid s_1)}
\;+\;
H\,\varepsilon\Bigl(\frac{1}{\alpha_F} + 1\Bigr),
\label{eq:path-final}
\end{equation}
where $\tau^{\det}$ is the deterministic trajectory induced by $(s_1,a_{1:H})$ under $(T,r,c)$,
and $\mathbb{I}[\cdot]$ is the indicator function.
Plug~\eqref{eq:path-final} into~(A). For each fixed $s_1$,
\begin{align}
& \mathbb{E}_{a_{1:H} \sim \pi^{\mathrm{CDT}}_F(\cdot \mid s_1)}
  \bigl[ F_R(s_1) - G^{\det}(s_1,a_{1:H}) \bigr] 
\nonumber\\
&=
\int \bigl( F_R(s_1) - G^{\det}(s_1,a_{1:H}) \bigr)
       P_{\pi^{\mathrm{CDT}}_F}(a_{1:H} \mid s_1)\, \mathrm{d}a_{1:H}
\nonumber\\
&\le
\int \bigl( F_R(s_1) - G^{\det}(s_1,a_{1:H}) \bigr)
\Biggl[
  \prod_{t=1}^{H} \pi_{\beta}(a_t \mid \bar{s}_t)
\Biggr]
\frac{\mathbb{I}\bigl[(R(\tau^{\det}),C(\tau^{\det})) = F(s_1)\bigr]}
     {P_{\pi_{\beta}}(F(s_1) \mid s_1)}
\,\mathrm{d}a_{1:H}
\nonumber\\
&\quad +
\int \bigl( F_R(s_1) - G^{\det}(s_1,a_{1:H}) \bigr)
      H\,\varepsilon\Bigl(\frac{1}{\alpha_F} + 1\Bigr)
      \,\mathrm{d}a_{1:H}.
\label{eq:A-split}
\end{align}
By the consistency of $F$, whenever the deterministic trajectory satisfies
$(R(\tau^{\det}),C(\tau^{\det})) = F(s_1)$, we have
$
F_R(s_1) = G^{\det}(s_1,a_{1:H}),
$
and hence the integrand in the first term of~\eqref{eq:A-split} is zero.
%%%%%
For the second term, using $|F_R(s_1) - G^{\det}(s_1,a_{1:H})| \le H$, and the fact that the integral is taken w.r.t. a probability measure over $a_{1:H}$, we obtain
\[
\int \bigl| F_R(s_1) - G^{\det}(s_1,a_{1:H}) \bigr|
      H\,\varepsilon\Bigl(\frac{1}{\alpha_F} + 1\Bigr)
      \,\mathrm{d}a_{1:H}
\;\le\;
H^2\,\varepsilon\Bigl(\frac{1}{\alpha_F} + 1\Bigr).
\]
Substituting back into~\eqref{eq:A-split}, we conclude that
$\mathrm{(A)} \le H^2 \varepsilon\Bigl(\frac{1}{\alpha_F} + 1\Bigr).$
Combining the result with \eqref{eq:bound-B}, and absorbing constants into a universal constant $C>0$, yields
$$
\mathbb{E}_{s_1 \sim \mu}\bigl[F_R(s_1)\bigr] - J_R\bigl(\pi^{\mathrm{CDT}}_F\bigr)
\;\le\;
C \,\varepsilon H^2\Bigl(\frac{1}{\alpha_F} + 2\Bigr).
$$

The proof of \eqref{eq:cost-gap-cmdp} proceeds analogously and thus completes the proof of Theorem~\ref{thm:cmdp-alignment}.
\end{proof}
% %%%%%%%

\subsection{Proof of Proposition \ref{prop:kl-as-weighting}}\label{sec:proof2}

%%%%%%%%%%%%
\begin{proof}
To prove Proposition~\ref{prop:kl-as-weighting}, we show that the KL regularization term in \eqref{eq:rdt-kl} is proportional to the negative log-likelihood (NLL) loss, under the assumption that the policy $\pi_\theta$ is a factorized Gaussian distribution with a fixed isotropic covariance. The equivalence then implies that KL regularization explicitly reweights the NLL loss, thereby implementing an explicit resampling mechanism.

Recall the KL divergence between two multivariate Gaussians $\mathcal{N}(\mu_1, \Sigma_1)$ and $\mathcal{N}(\mu_2, \Sigma_2)$ is given by:
\begin{equation}
\label{eq:kl-general}
D_{\mathrm{KL}}\left( \mathcal{N}(\mu_1, \Sigma_1) \, \| \, \mathcal{N}(\mu_2, \Sigma_2) \right) = \frac{1}{2} \left[ \mathrm{tr}(\Sigma_2^{-1} \Sigma_1) + (\mu_2 - \mu_1)^\top \Sigma_2^{-1} (\mu_2 - \mu_1) - d + \log \frac{|\Sigma_2|}{|\Sigma_1|} \right],
\end{equation}
where \(d\) is the dimensionality of the action space.

In our setting, the policy is parameterized as $\pi_\theta(a|s) = \mathcal{N}(\mu_\theta(s), \xi I)$ and the target is a Dirac approximation centered at the action $a$, i.e., $\mathcal{N}(a, \xi I)$. Substituting into \eqref{eq:kl-general}, we obtain:
\[
D_{\mathrm{KL}}\left( \mathcal{N}(\mu_\theta(s), \xi I) \, \| \, \mathcal{N}(a, \xi I) \right) 
= \frac{1}{2} \left[ \mathrm{tr}(I) + (\mu_\theta(s) - a)^\top (\xi I)^{-1} (\mu_\theta(s) - a) - d + \log 1 \right],
\]
where we use the fact that $\Sigma_1 = \Sigma_2 = \xi I$. Simplifying yields:
\[
D_{\mathrm{KL}}\left( \mathcal{N}(\mu_\theta(s), \xi I) \, \| \, \mathcal{N}(a, \xi I) \right) = \frac{1}{2\xi} \| \mu_\theta(s) - a \|^2.
\]

Taking expectation over $(s,a) \sim \mathcal{D}$ gives:
\[
\mathbb{E}_{(s,a) \sim \mathcal{D}} \left[ D_{\mathrm{KL}}\left( \mathcal{N}(\mu_\theta(s), \xi I) \, \| \, \mathcal{N}(a, \xi I) \right) \right] = \frac{1}{2\xi} \, \mathbb{E}_{(s,a) \sim \mathcal{D}} \left[ \| \mu_\theta(s) - a \|^2 \right],
\]
which shows that the KL divergence is proportional to the mean squared error (MSE) loss between $\mu_\theta(s)$ and the ground-truth action $a$.

The log-likelihood of the Gaussian policy $\pi_\theta(a|s) = \mathcal{N}(a; \mu_\theta(s), \xi I)$ is:
\[
\log \pi_\theta(a|s) = -\frac{d}{2} \log(2\pi \xi) - \frac{1}{2\xi} \| a - \mu_\theta(s) \|^2.
\]
Taking the negative log-likelihood gives:
\[
-\log \pi_\theta(a|s) = \frac{1}{2\xi} \| \mu_\theta(s) - a \|^2 + \frac{d}{2} \log(2\pi \xi).
\]
The constant term $\frac{d}{2} \log(2\pi \xi)$ is independent of the policy parameters $\theta$ and does not affect optimization. Therefore, minimizing the KL divergence is equivalent (up to scaling and additive constants) to minimizing the NLL loss:
\begin{equation}
\label{eq:nll-kl-eq}
\mathbb{E}_{(s,a)\sim\mathcal{D}} \left[ D_{\mathrm{KL}}\left( \mathcal{N}(\mu_\theta(s), \xi I) \, \| \, \mathcal{N}(a, \xi I) \right) \right] = \mathbb{E}_{(s,a)\sim\mathcal{D}} \left[ - \log \pi_\theta(a|s) \right] + \text{constant}.
\end{equation}

Recall that the RDT objective in \eqref{eq: main-DTKL} augments the standard NLL loss with a KL regularization term over expert trajectories:
\begin{small}
$$
\mathcal{L}_{\text{RDT}}(\theta) = \mathbb{E}_{\tau \sim \mathcal{D}} \sum_{i=1}^{H} \left[ -\log \pi_\theta(a_i | s_i, g(\tau_i), \bar{\tau}_{t-1}^K) \right]
+ \alpha \, \mathbb{E}_{\tau \sim \mathcal{D}_e} \sum_{i=1}^{H} \left[ D_{\mathrm{KL}} \left( \pi_\theta(\cdot|s_i, g(\tau_i), \bar{\tau}_{t-1}^K) \, \| \, \pi_e(\cdot|s_i) \right) \right].
$$
\end{small}
If $\pi_e$ is a delta function centered at $a_i$ (i.e., using the empirical action from the expert trajectory), and both policies are Gaussians with identical covariance $\xi I$, the KL term simplifies to an MSE loss, and hence to an NLL term by \eqref{eq:nll-kl-eq}. Thus, the second expectation term becomes equivalent to an additional NLL loss over $\mathcal{D}_e$ with weighting factor $\alpha$.

This yields the final objective:
\[
\mathcal{L}_{\text{RDT}}(\theta) = \mathbb{E}_{\tau \sim \mathcal{D}} \left[ \left(1 + \alpha \cdot \mathbb{I}[\tau \in \mathcal{D}_e] \right) \cdot \sum_{i=1}^{H} -\log \pi_\theta(a_i | s_i, g(\tau_i), \bar{\tau}_{t-1}^K) \right] + \text{constant}.
\]
Since the constant term does not affect the optimization process, the proof is thus completed.
\end{proof}
%%%%%%%%%%%%

\section{Experimental Details}\label{sec:Experimental Details}

\subsection{Benchmark Details}\label{sec:Benchmark-Details}

Our evaluation for \abbc~is performed across 3 domains in the DSRL benchmark~\cite{liu2023datasets}: \textbf{SafetyGym}, \textbf{BulletSafetyGym} and \textbf{MetaDrive}. These domains are designed to evaluate offline safe RL algorithms in continuous-control navigation and locomotion settings, where agents must achieve task goals while satisfying state and action dependent safety constraints encoded as cumulative cost signals.

\begin{itemize}[left=0pt]
    \item
    \textbf{SafetyGym} tasks in DSRL are adapted from the Safety-Gymnasium suite, which builds on the MuJoCo physics engine and provides safety-critical continuous-control environments with explicit cost signals.
    Each environment combines a mobile robot (typically \texttt{Point} or \texttt{Car}) with a navigation or manipulation task (\texttt{Goal}, \texttt{Button}, \texttt{Push}, or \texttt{Circle}) and a difficulty level that determines the density and layout of hazards.
    The agent observes its proprioceptive state together with lidar-like readings of nearby obstacles and goals, and receives dense rewards for task progress (e.g., reaching targets, pressing designated buttons, or pushing boxes) while accumulating instantaneous costs when entering hazardous regions or colliding with unsafe objects such as hazards, vases, pillars, or gremlins.
    Within DSRL, these Safety Gym tasks constitute safety-constrained navigation and locomotion benchmarks, where constraint tightness and the reward-cost trade-off can be systematically varied through different layouts and cost thresholds.
    \item
    \textbf{BulletSafetyGym} is a suite of safety-constrained locomotion tasks implemented on top of the PyBullet physics engine, designed to evaluate constrained RL algorithms under fast, contact-rich dynamics.
    In the DSRL benchmark, we use eight tasks constructed by pairing four agent morphologies (\texttt{Ball}, \texttt{Car}, \texttt{Drone}, \texttt{Ant}) with two task families, \texttt{Run} and \texttt{Circle}.
    In the \texttt{Run} tasks, agents are trained to move quickly along a straight corridor and obtain reward for forward progress, while incurring cost whenever they cross lateral safety boundaries or exceed an agent-specific velocity limit.
    In the \texttt{Circle} tasks, agents are encouraged to follow a circular track at high speed within a restricted safe region; deviating from the track or leaving the safety zone results in constraint violations and additional cost.
    Compared to Safety Gym, Bullet Safety Gym offers shorter-horizon tasks with more diverse robot morphologies, yielding compact yet challenging benchmarks for studying the trade-off between reward maximization and strict safety requirements in offline safe RL.
    \item
    \textbf{MetaDrive} in DSRL is an autonomous-driving domain built on top of the MetaDrive simulator, which provides interactive traffic scenarios with explicit safety costs.
    In this domain, the agent controls a vehicle to follow a route and make steady progress toward its destination while interacting with surrounding traffic of varying complexity.
    DSRL includes a collection of driving datasets with increasing scenario difficulty (e.g., \texttt{Easy}/\texttt{Medium}/\texttt{Hard}) and traffic density (e.g., \texttt{Sparse}/\texttt{Mean}/\texttt{Dense}), which systematically amplify the safety--performance tension as the environment becomes more congested and challenging.
    The reward encourages route progress and stable driving behavior (e.g., advancing along the lane and reaching the goal), while safety cost is incurred for critical violations such as collisions (with vehicles or objects) and driving out of the road.
    Compared to SafetyGym and BulletSafetyGym, MetaDrive captures more realistic, multi-agent safety constraints arising from traffic interactions, and thus serves as a complementary benchmark for evaluating offline safe RL algorithms under complex dynamics and stringent safety requirements.
\end{itemize}

% The full dataset return-cost distributions for all SafetyGym and BulletSafetyGym tasks are shown in Figure~\ref{fig:DSRL-distribution44}.

\subsection{Implementation Details}\label{sec:imple-details}

\textbf{The policy network} of \algcb~is implemented as a Decision Transformer, built upon the open-source \texttt{minGPT} codebase and the open-source \texttt{OSRL} codebase\footnote{\url{https://github.com/liuzuxin/OSRL}}. Detailed model parameters are provided in Table~\ref{Tab:Hyperparameters-of-CR2DT}.

\begin{table}[h]
\centering
\small
\caption{Hyperparameters of \algcb~in our experiment.}
\label{Tab:Hyperparameters-of-CR2DT}
\begin{tabular}{lc}
\toprule
\textbf{Parameter} & \textbf{Value} \\
\midrule
Number of layers           & 3 \\
Number of attention heads  & 8 \\
Embedding dimension        & 128 \\
Batch size                 & 2048 \\
Context length $K$         & 10 \\
Learning rate              & 0.0001 \\
Dropout                    & 0.1 \\
Adam betas                 & (0.9, 0.999) \\
Grad norm clip             & 0.25 \\
\bottomrule
\end{tabular}
\end{table}

\textbf{The Q-networks and cost-networks} are represented by 4-layer MLPs with Mish activations and 128 hidden units for each layer. Detailed model parameters are provided in Table~\ref{Tab:Hyperparameters-of-QC}.

\begin{table}[h]
\centering
\small
\caption{Hyperparameters of value and cost networks in our experiment.}
\label{Tab:Hyperparameters-of-QC}
\begin{tabular}{lc}
\toprule
\textbf{Parameter} & \textbf{Value} \\
\midrule
Number of layers           & 4 \\
Embedding dimension        & 128 \\
Batch size                 & 2048 \\
Learning rate              & 5e-5 \\
Adam betas                 & (0.9, 0.999) \\
Grad norm clip             & 0.25 \\
Soft target update rate    & $0.01$ \\
Dual-ascent learning rate  & 3e-4 \\
\bottomrule
\end{tabular}
\end{table}

The Q-regularization coefficient is selected separately for each environment via Tree-structured Parzen Estimator (TPE) search over $\{0.1, 0.3, 0.5\}$. 
In addition, $\kappa$ is treated as a tunable hyperparameter (rather than a hard constraint) and is set to $10$ in all experiments.
Training is performed for a total of $2\times 10^{5}$ iterations; the Transformer policy is updated from the beginning, while the Q-network and critic network are initialized with the policy updates and only start training after the first $5\times 10^{4}$ iterations, after which they are updated alongside the policy for the remaining iterations. 
All networks are trained using the Adam optimizer~\cite{kingma2014adam}. 
The experiments were conducted on two servers, each equipped with two AMD EPYC 7542 32-Core Processors and 8 NVIDIA GeForce RTX 4090 GPUs with 24\,GB of memory. These computational settings ensure reproducibility and align with the reported performance metrics.

\bibliographystyle{elsarticle-num}
\bibliography{references}

\begin{thebibliography}{10}
\expandafter\ifx\csname url\endcsname\relax
  \def\url#1{\texttt{#1}}\fi
\expandafter\ifx\csname urlprefix\endcsname\relax\def\urlprefix{URL }\fi
\expandafter\ifx\csname href\endcsname\relax
  \def\href#1#2{#2} \def\path#1{#1}\fi

\bibitem{koiralalatent}
P.~Koirala, Z.~Jiang, S.~Sarkar, C.~Fleming, Latent safety-constrained policy
  approach for safe offline reinforcement learning, in: The Thirteenth
  International Conference on Learning Representations, 2024.

\bibitem{gong2025offline}
Z.~Gong, A.~Kumar, P.~Varakantham, Offline safe reinforcement learning using
  trajectory classification, in: Proceedings of the AAAI Conference on
  Artificial Intelligence, Vol.~39, 2025, pp. 16880--16887.

\bibitem{suboundary}
H.~Su, D.~Peng, Z.~Zhuang, Y.~Liu, Q.~Chen, D.~Wang, Q.~Liu, Boundary-to-region
  supervision for offline safe reinforcement learning, in: The Thirty-ninth
  Annual Conference on Neural Information Processing Systems.

\bibitem{guo2025constraint}
Z.~Guo, W.~Zhou, S.~Wang, W.~Li, Constraint-conditioned actor-critic for
  offline safe reinforcement learning, in: The Thirteenth International
  Conference on Learning Representations, 2025.

\bibitem{zhang2025wococo}
C.~Zhang, W.~Xiao, T.~He, G.~Shi, Wococo: Learning whole-body humanoid control
  with sequential contacts, in: Conference on Robot Learning, PMLR, 2025, pp.
  455--472.

\bibitem{ji2025exbody2}
M.~Ji, X.~Peng, F.~Liu, J.~Li, G.~Yang, X.~Cheng, X.~Wang, Exbody2: Advanced
  expressive humanoid whole-body control, in: RSS 2025 Workshop on Whole-body
  Control and Bimanual Manipulation: Applications in Humanoids and Beyond.

\bibitem{liu2023constrained}
Z.~Liu, Z.~Guo, Y.~Yao, Z.~Cen, W.~Yu, T.~Zhang, D.~Zhao, Constrained decision
  transformer for offline safe reinforcement learning, in: International
  Conference on Machine Learning, PMLR, 2023, pp. 21611--21630.

\bibitem{kirk2023survey}
R.~Kirk, A.~Zhang, E.~Grefenstette, T.~Rockt{\"a}schel, A survey of zero-shot
  generalisation in deep reinforcement learning, Journal of Artificial
  Intelligence Research 76 (2023) 201--264.

\bibitem{leecoptidice}
J.~Lee, C.~Paduraru, D.~J. Mankowitz, N.~Heess, D.~Precup, K.-E. Kim, A.~Guez,
  Coptidice: Offline constrained reinforcement learning via stationary
  distribution correction estimation, in: International Conference on Learning
  Representations.

\bibitem{xu2022constraints}
H.~Xu, X.~Zhan, X.~Zhu, Constraints penalized q-learning for safe offline
  reinforcement learning, in: Proceedings of the AAAI Conference on Artificial
  Intelligence, Vol.~36, 2022, pp. 8753--8760.

\bibitem{zhengsafe}
Y.~Zheng, J.~Li, D.~Yu, Y.~Yang, S.~E. Li, X.~Zhan, J.~Liu, Safe offline
  reinforcement learning with feasibility-guided diffusion model, in: The
  Twelfth International Conference on Learning Representations, 2024.

\bibitem{wu2024elastic}
Y.-H. Wu, X.~Wang, M.~Hamaya, Elastic decision transformer, Advances in Neural
  Information Processing Systems 36 (2024).

\bibitem{hu2024q}
S.~Hu, Z.~Fan, C.~Huang, L.~Shen, Y.~Zhang, Y.~Wang, D.~Tao, Q-value
  regularized transformer for offline reinforcement learning, in: International
  Conference on Machine Learning, PMLR, 2024, pp. 19165--19181.

\bibitem{zheng2024decomposed}
H.~Zheng, L.~Shen, Y.~Luo, T.~Liu, J.~Shen, D.~Tao, Decomposed prompt decision
  transformer for efficient unseen task generalization, Advances in Neural
  Information Processing Systems 37 (2024) 122984--123006.

\bibitem{zhengdecision}
H.~Zheng, L.~Shen, Y.~Luo, D.~Ye, B.~Du, J.~Shen, D.~Tao, Decision mixer:
  Integrating long-term and local dependencies via dynamic token selection for
  decision-making, in: Forty-second International Conference on Machine
  Learning.

\bibitem{hu2024harmodt}
S.~Hu, Z.~Fan, L.~Shen, Y.~Zhang, Y.~Wang, D.~Tao, Harmodt: Harmony multi-task
  decision transformer for offline reinforcement learning, in: International
  Conference on Machine Learning, PMLR, 2024, pp. 19182--19197.

\bibitem{chen2021decision}
L.~Chen, K.~Lu, A.~Rajeswaran, K.~Lee, A.~Grover, M.~Laskin, P.~Abbeel,
  A.~Srinivas, I.~Mordatch, Decision transformer: Reinforcement learning via
  sequence modeling, Advances in neural information processing systems 34
  (2021) 15084--15097.

\bibitem{brandfonbrener2022does}
D.~Brandfonbrener, A.~Bietti, J.~Buckman, R.~Laroche, J.~Bruna, When does
  return-conditioned supervised learning work for offline reinforcement
  learning?, Advances in Neural Information Processing Systems 35 (2022)
  1542--1553.

\bibitem{liu2023datasets}
Z.~Liu, Z.~Guo, H.~Lin, Y.~Yao, J.~Zhu, Z.~Cen, H.~Hu, W.~Yu, T.~Zhang, J.~Tan,
  et~al., Datasets and benchmarks for offline safe reinforcement learning,
  arXiv preprint arXiv:2306.09303 (2023).

\bibitem{kumar2020conservative}
A.~Kumar, A.~Zhou, G.~Tucker, S.~Levine, Conservative q-learning for offline
  reinforcement learning, Advances in neural information processing systems 33
  (2020) 1179--1191.

\bibitem{fujimoto2021minimalist}
S.~Fujimoto, S.~S. Gu, A minimalist approach to offline reinforcement learning,
  Advances in neural information processing systems 34 (2021) 20132--20145.

\bibitem{hu2025graph}
S.~Hu, L.~Shen, Y.~Zhang, D.~Tao, Graph decision transformer for offline
  reinforcement learning, SCIENCE CHINA-INFORMATION SCIENCES 68~(6) (2025).

\bibitem{fujimoto2019off}
S.~Fujimoto, D.~Meger, D.~Precup, Off-policy deep reinforcement learning
  without exploration, in: International conference on machine learning, PMLR,
  2019, pp. 2052--2062.

\bibitem{kumar2019stabilizing}
A.~Kumar, J.~Fu, M.~Soh, G.~Tucker, S.~Levine, Stabilizing off-policy
  q-learning via bootstrapping error reduction, Advances in Neural Information
  Processing Systems 32 (2019).

\bibitem{lee2022coptidice}
J.~Lee, C.~Paduraru, D.~J. Mankowitz, N.~Heess, D.~Precup, K.~E. Kim, A.~Guez,
  Coptidice: Offline constrained reinforcement learning via stationary
  distribution correction estimation, in: 10th International Conference on
  Learning Representations, ICLR 2022, 2022.

\bibitem{yao2024oasis}
Y.~Yao, Z.~Cen, W.~Ding, H.~Lin, S.~Liu, T.~Zhang, W.~Yu, D.~Zhao, Oasis:
  Conditional distribution shaping for offline safe reinforcement learning,
  Advances in Neural Information Processing Systems 37 (2024) 78451--78478.

\bibitem{bairebalancing}
W.~Bai, C.~Chen, Y.~Fu, Q.~Xu, C.~Zhang, H.~Qian, Rebalancing return coverage
  for conditional sequence modeling in offline reinforcement learning, in: The
  Thirty-ninth Annual Conference on Neural Information Processing Systems,
  2025.

\bibitem{zeng2023goal}
Z.~Zeng, C.~Zhang, S.~Wang, C.~Sun, Goal-conditioned predictive coding for
  offline reinforcement learning, Advances in Neural Information Processing
  Systems 36 (2023) 25528--25548.

\bibitem{kim2024stitching}
S.~Kim, Y.~Choi, D.~E. Matsunaga, K.-E. Kim, Stitching sub-trajectories with
  conditional diffusion model for goal-conditioned offline rl, in: Proceedings
  of the AAAI Conference on Artificial Intelligence, Vol.~38, 2024, pp.
  13160--13167.

\bibitem{sutton1998reinforcement}
R.~S. Sutton, A.~G. Barto, et~al., Reinforcement learning: An introduction,
  Vol.~1, MIT press Cambridge, 1998.

\bibitem{zhou2023free}
Z.~Zhou, C.~Zhu, R.~Zhou, Q.~Cui, A.~Gupta, S.~S. Du, Free from bellman
  completeness: Trajectory stitching via model-based return-conditioned
  supervised learning, in: NeurIPS 2023 Foundation Models for Decision Making
  Workshop.

\bibitem{vaswani2017attention}
A.~Vaswani, N.~Shazeer, N.~Parmar, J.~Uszkoreit, L.~Jones, A.~N. Gomez,
  {\L}.~Kaiser, I.~Polosukhin, Attention is all you need, Advances in neural
  information processing systems 30 (2017).

\bibitem{yamagata2023q}
T.~Yamagata, A.~Khalil, R.~Santos-Rodriguez, Q-learning decision transformer:
  Leveraging dynamic programming for conditional sequence modelling in offline
  rl, in: International Conference on Machine Learning, PMLR, 2023, pp.
  38989--39007.

\bibitem{gao2024act}
C.-X. Gao, C.~Wu, M.~Cao, R.~Kong, Z.~Zhang, Y.~Yu, Act: Empowering decision
  transformer with dynamic programming via advantage conditioning, in:
  Proceedings of the AAAI Conference on Artificial Intelligence, Vol.~38, 2024,
  pp. 12127--12135.

\bibitem{weiadvantage}
J.~Wei, X.~Xu, Y.~Lan, T.~Liu, Y.~Wang, Advantage-guided transformer for
  conditional sequence modeling in offline reinforcement learning, Authorea
  Preprints.

\bibitem{tu2025dataset}
S.~Tu, J.~Sun, Q.~Zhang, Y.~Zhang, J.~Liu, K.~Chen, D.~Zhao, In-dataset
  trajectory return regularization for offline preference-based reinforcement
  learning, in: Proceedings of the AAAI Conference on Artificial Intelligence,
  Vol.~39, 2025, pp. 20929--20937.

\bibitem{wang2024critic}
Y.~Wang, C.~Yang, Y.~Wen, Y.~Liu, Y.~Qiao, Critic-guided decision transformer
  for offline reinforcement learning, in: Proceedings of the AAAI Conference on
  Artificial Intelligence, Vol.~38, 2024, pp. 15706--15714.

\bibitem{janner2021offline}
M.~Janner, Q.~Li, S.~Levine, Offline reinforcement learning as one big sequence
  modeling problem, Advances in neural information processing systems 34 (2021)
  1273--1286.

\bibitem{wang2022bootstrapped}
K.~Wang, H.~Zhao, X.~Luo, K.~Ren, W.~Zhang, D.~Li, Bootstrapped transformer for
  offline reinforcement learning, Advances in Neural Information Processing
  Systems 35 (2022) 34748--34761.

\bibitem{wang2023trajectory}
Y.~Wang, M.~Xu, L.~Shi, Y.~Chi, A trajectory is worth three sentences:
  multimodal transformer for offline reinforcement learning, in: Uncertainty in
  Artificial Intelligence, PMLR, 2023, pp. 2226--2236.

\bibitem{wang2024safe}
R.~Wang, D.~Zhou, Safe decision transformer with learning-based constraints,
  in: Neurips Safe Generative AI Workshop 2024, 2024.

\bibitem{guo2024temporal}
Z.~Guo, W.~Zhou, W.~Li, Temporal logic specification-conditioned decision
  transformer for offline safe reinforcement learning, in: International
  Conference on Machine Learning, PMLR, 2024, pp. 17003--17019.

\bibitem{altman2021constrained}
E.~Altman, Constrained Markov decision processes, Routledge, 2021.

\bibitem{brown2020language}
T.~Brown, B.~Mann, N.~Ryder, M.~Subbiah, J.~D. Kaplan, P.~Dhariwal,
  A.~Neelakantan, P.~Shyam, G.~Sastry, A.~Askell, et~al., Language models are
  few-shot learners, Advances in neural information processing systems 33
  (2020) 1877--1901.

\bibitem{hulearning}
S.~Hu, L.~Shen, Y.~Zhang, D.~Tao, Learning multi-agent communication from graph
  modeling perspective, in: The Twelfth International Conference on Learning
  Representations.

\bibitem{hu2023prompt}
S.~Hu, L.~Shen, Y.~Zhang, D.~Tao, Prompt-tuning decision transformer with
  preference ranking, arXiv preprint arXiv:2305.09648 (2023).

\bibitem{kongmastering}
Y.~Kong, G.~Ma, Q.~Zhao, H.~Wang, L.~Shen, X.~Wang, D.~Tao, Mastering massive
  multi-task reinforcement learning via mixture-of-expert decision transformer,
  in: Forty-second International Conference on Machine Learning.

\bibitem{lillicrap2015continuous}
T.~Lillicrap, Continuous control with deep reinforcement learning, arXiv
  preprint arXiv:1509.02971 (2015).

\bibitem{fujimoto2018addressing}
S.~Fujimoto, H.~Hoof, D.~Meger, Addressing function approximation error in
  actor-critic methods, in: International conference on machine learning, PMLR,
  2018, pp. 1587--1596.

\bibitem{lin2023safe}
Q.~Lin, B.~Tang, Z.~Wu, C.~Yu, S.~Mao, Q.~Xie, X.~Wang, D.~Wang, Safe offline
  reinforcement learning with real-time budget constraints, in: Proceedings of
  the 40th International Conference on Machine Learning, 2023, pp.
  21127--21152.

\bibitem{ji2023safety}
J.~Ji, B.~Zhang, J.~Zhou, X.~Pan, W.~Huang, R.~Sun, Y.~Geng, Y.~Zhong, J.~Dai,
  Y.~Yang, Safety gymnasium: A unified safe reinforcement learning benchmark,
  Advances in Neural Information Processing Systems 36 (2023) 18964--18993.

\bibitem{gronauer2022bullet}
S.~Gronauer, Bullet-safety-gym: A framework for constrained reinforcement
  learning (2022).

\bibitem{li2022metadrive}
Q.~Li, Z.~Peng, L.~Feng, Q.~Zhang, Z.~Xue, B.~Zhou, Metadrive: Composing
  diverse driving scenarios for generalizable reinforcement learning, IEEE
  Transactions on Pattern Analysis and Machine Intelligence (2022).

\bibitem{lee2021optidice}
J.~Lee, W.~Jeon, B.~Lee, J.~Pineau, K.-E. Kim, Optidice: Offline policy
  optimization via stationary distribution correction estimation, in:
  International Conference on Machine Learning, PMLR, 2021, pp. 6120--6130.

\bibitem{kingma2014adam}
D.~P. Kingma, J.~Ba, Adam: A method for stochastic optimization, arXiv preprint
  arXiv:1412.6980 (2014).

\end{thebibliography}

\end{document}